\documentclass{article}





\usepackage[preprint]{neurips_2022}
\usepackage[utf8]{inputenc} 
\usepackage[T1]{fontenc}    
\usepackage{hyperref}       
\usepackage{url}            
\usepackage{booktabs}       
\usepackage{amsfonts}       
\usepackage{nicefrac}       
\usepackage{microtype}      

\usepackage{tikz}
\usetikzlibrary{backgrounds}
\usepackage{pgfplots}
\usetikzlibrary{fit} 
\usetikzlibrary{arrows.meta}
\usepackage{xcolor}
\usepackage{pgfplots}
\usepackage{makecell}
\usepackage{amsmath}
\usetikzlibrary{intersections}
\usetikzlibrary{calc}
\usetikzlibrary{patterns}
\usetikzlibrary{positioning}
\usetikzlibrary{scopes}
\usepackage{amsfonts}
\usepackage{caption}
\usepackage{enumitem}
\usepackage{algorithm}
\usepackage{algorithmic}
\usepackage{float}
\usepackage{gensymb}

\usepackage{amsmath}
\usepackage{amsthm}
\newtheorem{prop}{Proposition}
\newtheorem{remark}{Remark}
\newtheorem{theorem}{Theorem}
\newtheorem{corollary}{Corollary}

\usepackage{graphicx}
\usepackage{subcaption}
\usepackage{cases}
\usepackage{empheq}

\usepackage{setspace}

\usepackage{soul}

\captionsetup[subfigure]{labelformat=empty}


\title{Towards Better Long-range Time Series Forecasting \\ using Generative Forecasting}

%

\author{%
  Shiyu Liu, Rohan Ghosh, Mehul Motani \\
  Department of Electrical and Computing Engineering\\
  National University of Singapore\\
  \texttt{shiyu\_liu@u.nus.edu, lsirg, motani@nus.edu.sg} \\
}

\makeatletter
\newcommand*{\rom}[1]{\expandafter\@slowromancap\romannumeral #1@}
\makeatother

\tikzset{%
  do path picture/.style={%
    path picture={%
      \pgfpointdiff{\pgfpointanchor{path picture bounding box}{south west}}%
        {\pgfpointanchor{path picture bounding box}{north east}}%
      \pgfgetlastxy\x\y%
      \tikzset{x=\x/2,y=\y/2}%
      #1
    }
  },
  sin wave/.style={do path picture={    
    \draw [line cap=round] (-3/4,0)
      sin (-3/8,1/2) cos (0,0) sin (3/8,-1/2) cos (3/4,0);
  }},
  cross/.style={do path picture={    
    \draw [line cap=round] (-1,-1) -- (1,1) (-1,1) -- (1,-1);
  }},
  plus/.style={do path picture={    
    \draw [line cap=round] (-3/4,0) -- (3/4,0) (0,-3/4) -- (0,3/4);
  }}
}

\makeatletter
\let\oldtagform@\tagform@
\renewcommand{\eqref}[1]{\textup{\oldtagform@{\ref{#1}}}}
\makeatother

\definecolor{Blue}{RGB}{0,0,255}
\definecolor{Red}{RGB}{255,0,0}

\definecolor{Green}{RGB}{0, 128, 0}
\definecolor{Orange}{RGB}{255, 165, 0}

\definecolor{Blue_1}{RGB}{77,33,189}
\definecolor{Orange_1}{RGB}{247,144,61}
\definecolor{Green_1}{RGB}{89,169,90}
\usepackage{wrapfig, blindtext}

\usepackage[english]{babel}

\usepackage[T1]{fontenc}
\usepackage{multirow}

\usepackage{titlesec}

\titlespacing\section{0pt}{4pt plus 4pt minus 2pt}{0pt plus 2pt minus 2pt}
\titlespacing\subsection{0pt}{4pt plus 4pt minus 2pt}{0pt plus 2pt minus 2pt}
\titlespacing\subsubsection{0pt}{4pt plus 4pt minus 2pt}{0pt plus 2pt minus 2pt}

\newcommand*\xbar[1]{%
  \hbox{%
    \vbox{%
      \hrule height 0.5pt 
      \kern0.5ex
      \hbox{%
        \kern-0.1em
        \ensuremath{#1}%
        \kern-0.1em
      }%
    }%
  }%
} 

\usepackage{graphicx}

\begin{document}

\setlength{\abovedisplayskip}{0pt}
\setlength{\belowdisplayskip}{0pt}

\maketitle

\begin{abstract}

Long-range time series forecasting is usually based on one of two 
existing forecasting strategies: Direct Forecasting and Iterative Forecasting, where the former provides low bias, high variance forecasts and the later leads to low variance, high bias forecasts. In this paper, we propose a new forecasting strategy called Generative Forecasting (GenF), which generates synthetic data for the next few time steps and then makes long-range forecasts based on generated and observed data. We theoretically prove that GenF is able to better balance the forecasting variance and bias, leading to a much smaller forecasting error. We implement GenF via three components: (i) a novel conditional Wasserstein Generative Adversarial Network (GAN) based generator for synthetic time series data generation, called CWGAN-TS. (ii) a transformer based predictor, which makes long-range predictions using both generated and observed data. (iii) an information theoretic clustering algorithm to improve the training of both the CWGAN-TS and the transformer based predictor. The experimental results on five public datasets demonstrate that GenF significantly outperforms a diverse range of state-of-the-art benchmarks and classical approaches. Specifically, we find a 5\% - 11\% improvement in predictive performance (mean absolute error) while having a 15\% - 50\% reduction in parameters compared to the benchmarks. Lastly, we conduct an ablation study to demonstrate the effectiveness of the components comprising GenF.
\end{abstract}

\section{Introduction}
Accurate forecasting of time series data is an important problem in many sectors, such as energy, finance and healthcare \cite{sezer2020financial,lim2021time,torres2021deep,NEURIPS2021_312f1ba2,bellot2021neural,bellot2021policy}. In terms of prediction horizon, long-range forecasting (also called multi-step ahead forecasting) is often preferred than short-range forecasting (i.e., few time steps ahead) as it allows more time for early intervention and planning opportunities  \cite{informer,li2019enhancing,rangapuram2018deep,cheng2020towards,NEURIPS2019_466accba}. As an example, long-range forecasting of patient's vital signs effectively gives clinicians more time to take actions and may reduce the occurrence of potential adverse events \cite{Edward2017,jarrett2021clairvoyance,informer}.

To perform long-range forecasting, there are two forecasting strategies: Direct Forecasting (DF) and Iterative Forecasting (IF). As the name suggests, DF directly makes predictions $N$ time steps ahead, but the forecasting performance tends to decrease (i.e., variance increases) as $N$ grows \cite{Mar2006}. In IF, the previous predictions are used as part of the input to recursively make predictions for the next time step. However, the predictions made in such a recursive and supervised manner is susceptible to error propagation, resulting in degraded forecasting performance (i.e., bias increases) as $N$ grows \cite{taieb2012review,taieb2015bias}. 

In this paper, we improve the performance of long-range time series forecasting by proposing a new forecasting strategy. The contributions of our work are summarized as follows.
\begin{enumerate}[itemsep = 0mm,leftmargin=5mm, topsep=0pt]

\item We propose a new forecasting strategy called Generative Forecasting (GenF), which generates synthetic data for the next few time steps and then makes long-range predictions based on generated and observed data. Theoretically, we prove that the proposed GenF is able to better balance the forecasting bias and variance, leading to a smaller forecasting error.

\item We implement GenF via three components: (i) a new conditional Wasserstein Generative Adversarial Network (GAN) \cite{arjovsky2017wasserstein, mirza2014conditional, gulrajani2017improved,goodfellow2014generative} based generator for synthetic time series data generation called CWGAN-TS. (ii) a transformer based predictor, which makes long-range predictions using both generated and observed data. (iii) a information theoretic clustering (ITC) algorithm to improve the training of both the CWGAN-TS and the transformer based predictor.

\item We conduct experiments on five public time series datasets and the experimental results demonstrate that GenF significantly outperforms a diverse range of state-of-the-art (SOTA)
benchmarks and classical approaches. Specifically, we find a 5\% - 11\% improvement in predictive performance while having a 15\% - 50\% reduction in parameters compared to the SOTA benchmarks.

\item We conduct an ablation study to demonstrate the effectiveness of each component comprising GenF from the perspective of synthetic data generation and forecasting performance.
\end{enumerate}

\section{Background on Time Series Forecasting}

{\bf Problem definition. } Suppose we have an observation window containing multivariate observations for past $M$ time steps \{$X_1$, $X_2$, $\cdots$,  $X_M$ | $X_{i} \in \mathbb{R}^K$\}, where $M$ is the observation window length, $K$ is the number of features per observation and $X_{i}$ is the observation at time step $i$ (see Fig.\ref{OP}). The task of time series forecasting is to find an approach to map past observations to the future value, i.e., \{$X_1$, $X_2$, $\cdots$,  $X_M$\} $\rightarrow$ $\xbar{X}_{M+N}$. We note that $N$ is the prediction horizon, indicating we plan to make predictions $N$ time steps ahead (i.e., at time step $M$+$N$ in Fig. \ref{OP}). Next, in Section \ref{RW}, we discuss related work and the application of GAN-based models. In Section \ref{BM}, we shortlist two classical models and five SOTA baselines for performance comparison.

\subsection{Related Work}
\label{RW}

The example of early methods using neural networks to perform long-range forecasting is \cite{nguyen2004multiple} which proposed a group of networks to make predictions at different time steps. Along the way, several works attempt to improve the long-range forecasting by proposing new architectures. For example, \cite{yu2017long} proposed a Long Short-Term Memory (LSTM) \cite{LSTM} based Tensor-Train Recurrent Neural Network as a module for sequence-to-sequence framework \cite{sutskever2014sequence}, called TLSTM. \cite{DeepAR} proposed an autoregressive recurrent network called DeepAR to provide probabilistic forecasts. \cite{lai2018modeling} proposed a Long- and Short-term Time-series network (LSTNet) which incorporates with an attention-based layer and autoregressive models. More recently, transformer based models have shown superior performance in capturing long-range dependency than recurrent networks. Several recent works aim to improve the transformer by simplifying its complexity. As an example, LogSparse \cite{li2019enhancing} and Reformer \cite{Kitaev2020Reformer} use heuristics to reduce the complexity of self-attention mechanism from $\mathcal{O}(n^2)$ to $\mathcal{O}(n \log n)$, where $n$ is the sequence length. Moreover, Informer \cite{informer} aimed to further reduce the complexity via ProbSparse and distilling operations. In terms of the forecasting strategy, all these methods can be classified into two main classes: direct forecasting and iterative forecasting.

In {\bf direct forecasting}, the model $f$ is trained to directly make predictions for the prediction horizon $N$, i.e., $\xbar{X}_{M+N} = f (X_1, \cdots,  X_M)$ (see the illustration Fig. \ref{OP}). The advantage is that the models trained for different values of $N$ are independent and hence, immune to error propagation. However, as $N$ grows, DF tends to provide predictions with low bias but high variance \cite{taieb2012review}. This can be seen by considering an example where the best forecast is a linear trend. In this case, DF may yield a broken curve as it does not leverage the dependencies of synthetic data \cite{bon2012}. 

In {\bf iterative forecasting}, the model is trained to make predictions for the next time step only i.e., $\xbar{X}_{M+1} = f (X_1, \cdots,  X_M)$ (see Fig. \ref{OP}). The same model will be used over and over again and previous predictions are used together with the past observations to make predictions for the next time step (e.g., $\xbar{X}_{M+2} = f (X_2, \cdots,  X_M, \xbar{X}_{M+1})$). This process is recursively repeated to make predictions for next $N$ time steps. The previous predictions can be considered as synthetic data with a synthetic window length = $N$-1. However, the synthetic data generated in such a supervised and recursive manner is susceptible to error propagation, i.e., a small error in the current prediction becomes larger in subsequent predictions, leading to low variance but high bias predictions \cite{Sorja2007}. 
Based on IF, to address the issue of error propagation, RECTIFY \cite{taieb2012} rectifies the synthetic data to be unbiased and Seq2Seq based models \cite{sutskever2014sequence} extends the decoder by adding more sequential models and each sequential model is trained for a specific prediction horizon with different parameters.




Recently, GAN based networks have demonstrated promising results in many generative tasks. The first GAN applied to time series data was C-RNN-GAN \cite{mogren2016c} which used LSTM as the generator and discriminator. 
Along the way, many works have explored generating synthetic data to address various problems. As an example, \cite{frid2018gan} used synthetic data augmentation to improve the classification results, and \cite*{yoon2019time} proposed TimeGAN which trains predictive models to perform one-step ahead forecasting. However, to the best of our knowledge,
the use of synthetic data generated by the GAN based networks to improve long-range forecasting remains largely unexplored.


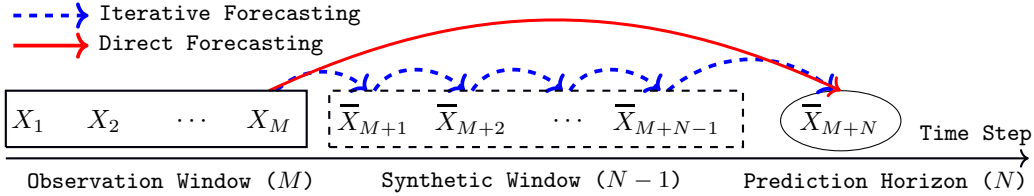
\begin{figure}[!t]
\begin{tikzpicture}[font=\ttfamily]
	\draw[->] [blue!5!black, thick] (-0.3, 1.5) -- (13.3,1.5);	
	\node at (0,2) (c) {$X_{1}$};
	\node at (1,2) (c) {$X_{2}$};
	\node at (2.2,2) (c) {$\cdots$};
	\node at (3.2,2) (c) {$X_{M}$};
	\node at (4.6,2) (c) {$\xbar{X}_{M+1}$};
	\node at (5.9,2) (c) {$\xbar{X}_{M+2}$};
	\node at (7.2,2) (c) {$\cdots$};
	\node at (8.5,2) (c) {$\xbar{X}_{M+N-1}$};
	\node at (10.8,2) (c) {$\xbar{X}_{M+N}$};
	\draw (10.8,2) ellipse (0.8cm and 0.4cm);

    \draw [dashed,thick,->,Blue, line width=0.5mm](3.3,2.4) [out=45,in=135] to  (4.5,2.4);
    \draw [dashed,thick,->,Blue, line width=0.5mm](4.6,2.4) [out=60,in=135] to  (5.8,2.4);
    \draw [dashed,thick,->,Blue, line width=0.5mm](5.9,2.4) [out=60,in=135] to  (7.1,2.4);
    \draw [dashed,thick,->,Blue, line width=0.5mm](7.2,2.4) [out=60,in=135] to  (8.4,2.4);
    \draw [dashed,thick,->,Blue, line width=0.5mm](8.5,2.4) [out=40,in=140] to  (10.7,2.4);

    \draw [thick,->,Red, line width=0.4mm](3.2,2.4) [out=25,in=155] to  (10.8,2.4);

	\node [black] at (1.9,1.2) (c) {\footnotesize Observation Window ($M$)};
	\node [black] at (6.7,1.2) (c) {\footnotesize Synthetic Window ($N-1$)};
	\node [black] at (11.4,1.2) (c) {\footnotesize Prediction Horizon ($N$)};
	\node [black] at (12.6,1.8) (c) { \small Time Step};
	
    \draw[thick,black,->,Red, line width=0.5mm] (-0.2,3.0)--(0.8,3.0)node [right,black]{\footnotesize Direct Forecasting};
    \draw[thick,black,dashed,->,Blue, line width=0.5mm] (-0.2,3.4)--(0.8,3.4)node [right,black]{\footnotesize Iterative Forecasting};

	\draw [blue!5!black, thick](-0.3, 1.65) -- (3.7, 1.65) -- (3.7, 2.4) -- (-0.3, 2.4) -- (-0.3, 1.65);	
	\draw [blue!5!black, dashed, thick](4, 1.65) -- (9.5, 1.65) -- (9.5, 2.4) -- (4, 2.4) -- (4, 1.65);		
	\end{tikzpicture}
	\vspace{-2mm}
	\caption{Direct/Iterative Forecasting via Observation/Synthetic Window and Prediction Horizon. }
	\label{OP}
	\vspace{-2mm}
\end{figure}

\subsection{Benchmark Methods}
\label{BM}
We shortlist five SOTA baselines discussed above: {\bf (i) TLSTM} (seq2seq based model), {\bf (ii) LSTNet} (attention based model), {\bf (iii) DeepAR} (autoregressive based model), {\bf (iv) LogSparse} (transformer based model) and {\bf (v) Informer} (transformer based model) for comparison as they are reported to provide outstanding long-range forecasting performance \cite{lai2018modeling,informer}. Moreover, the authors of these methods have provided clear and concise source code, allowing us to correctly implement and tune these algorithms. In addition, two classical time series forecasting approaches: {\bf (i) LSTM and (ii) Autoregressive Integrated Moving Average (ARIMA)} \cite{ARIMA} are examined for comparison as well.

\section{Generative Forecasting (GenF)}
\label{sec3}

We first introduce the idea of GenF in Section \ref{sec3.1}. Next, in Section \ref{sec3.2}, we theoretically prove that the proposed GenF is able to better balance the forecasting variance and bias, leading to a smaller forecasting error. Lastly, in Section \ref{sec3.3}, we detail the implementation of GenF.

\subsection{Idea of GenF}
\label{sec3.1}

To improve long-range time series forecasting, we develop an approach called Generative Forecasting (GenF), which consists of two steps (see our illustration in Fig. \ref{GenF_Img}): 
\begin{enumerate}[itemsep = 0mm,leftmargin=5mm, topsep=0pt]
    \item {\bf Synthetic Data Generation}: GenF first generates synthetic synthetic data for next $L$ time steps (i.e., $\widetilde{X}_{M+1},..., \widetilde{X}_{M+L}$) conditioned on the past $M$ observations.
    \item {\bf Long-range Prediction}: GenF concatenates the past observations ($X_{1},..., X_{M}$) with the generated synthetic data ($\widetilde{X}_{M+1},...,\widetilde{X}_{M+L}$) and keeps a window size of $M$ by dropping the oldest observations, resulting in a sequence of ($X_{L+1},..., X_{M}$, $\widetilde{X}_{M+1},...,\widetilde{X}_{M+L}$). Finally, GenF makes long-range predictions for time step $M+N$ using ($X_{L+1},..., X_{M}$, $\widetilde{X}_{M+1},...,\widetilde{X}_{M+L}$) as input.
\end{enumerate}

The {\bf key difference} with DF and IF is that GenF leverages synthetic data to shorten the effective prediction horizon and has a flexible synthetic window length of $L$ (see Fig. \ref{GenF_Img})), respectively. Unlike IF of which the synthetic window length depends on the prediction horizon, the synthetic window length $L$ of GenF does not depends on the prediction horizon and is flexible. Adjusting the value of $L$ is a trade-off between forecasting variance and bias. A large value of $L$ brings GenF close to iterative forecasting, while a small value of $L$ brings GenF close to direct forecasting.

\subsection{Theoretical Results}
\label{sec3.2}

In this section, we provide some theoretical insights into the behavior of forecasting error for the proposed GenF approach. In order to do so, we first undertake a bias-variance based approach for approximating the forecasting errors. Then, we estimate the variance of the low-bias direct forecasting step and the bias of the low-variance iterative forecasting step. Subsequently, we provide a theoretical result that bounds the forecasting error in terms of the bias and variance of the iterative and direct forecasting steps, respectively. Finally, we show that under certain conditions, the proposed GenF will yield much better performance. The proofs of the results given below are provided in the Appendix.



\begin{figure}[!t]
\begin{tikzpicture}[font=\ttfamily]
	\draw[->] [blue!5!black, thick] (-0.3, 1.5) -- (13.3,1.5);	
	\node at (0,2) (c) {$X_{1}$};
	\node at (1,2) (c) {$X_{2}$};
	\node at (2.2,2) (c) {$\cdots$};
	\node at (3.2,2) (c) {$X_{M}$};
	\node at (4.9,2) (c) {$\widetilde{X}_{M+1}$};
	\node at (5.9,2) (c) {$\cdots$};
	\node at (6.9,2) (c) {$\widetilde{X}_{M+L}$};
	\node at (8.5,2) (c) {$\widetilde{X}_{M+L+1}$};
	\node at (9.6,2) (c) {$\cdots$};
	\node at (10.8,2) (c) {$\xbar{X}_{M+N}$};
	\draw (10.8,2) ellipse (0.8cm and 0.4cm);
	\node [black] at (1.8,1.2) (c) {\footnotesize Observation Window ($M$)};
	\node [black] at (6.0,1.2) (c) {\footnotesize Synthetic Window ($L$)};
	\node [black] at (10.9,1.2) (c) {\footnotesize Prediction Horizon ($N$)};
	\node [black] at (12.6,1.8) (c) { \small Time Step};
	\draw [blue!5!black, thick](-0.3, 1.65) -- (3.7, 1.65) -- (3.7, 2.4) -- (-0.3, 2.4) -- (-0.3, 1.65);	
	\draw [blue!5!black, dashed, thick](4.3, 1.65) -- (7.5, 1.65) -- (7.5, 2.4) -- (4.3, 2.4) -- (4.3, 1.65);

    \draw [thick,->,Blue,dashed, line width=0.5mm](3.2,2.4) [out=60,in=120] to  (4.8,2.4);
    \draw [thick,->,Blue,dashed, line width=0.5mm](4.9,2.4) [out=70,in=120] to  (5.8,2.4);
    \draw [thick,->,Blue,dashed, line width=0.5mm](5.9,2.4) [out=70,in=120] to  (6.9,2.4);
    
    \draw [thick,->,Red, line width=0.5mm](7.0,2.4) [out=70,in=120] to  (10.7,2.4);

    \draw[thick,black,->,Blue,dashed, line width=0.5mm] (-0.2,3.4)--(0.8,3.4)node [right,black]{\footnotesize Synthetic Data generation};
    \draw[thick,black,->,Red, line width=0.5mm] (-0.2,2.9)--(0.8,2.9)node [right,black]{\footnotesize Long-range Prediction};
    \end{tikzpicture}
    \vspace{-2mm}
    \caption{Illustration of GenF via Observation/Synthetic Window and Prediction Horizon.}
     \label{GenF_Img}
     \vspace{-2mm}
\end{figure}
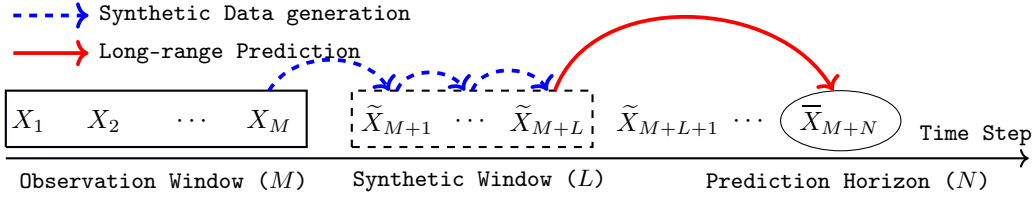

Let $Y$ = $\{{X_1, \cdots, X_M} \}$ be the past observations and $u_{M + N}$ = $\mathbb{E}[X_{M+N}|Y]$ be the conditional expectation of $N$-step ahead observation. Let $f(Y, \theta, N)$ be the $N$-step ahead forecast using $Y$ as the input with parameter $\theta \in \Theta$, where $\Theta$ represents the set of all possible trained parameter configurations from the dataset instances from the underlying data distribution. The Mean Squared Error (MSE) of a given strategy at prediction horizon $N$ can be decomposed as follows \cite{taieb2015bias}.
\vspace{2mm}\begin{align}
\resizebox{0.945\textwidth}{!}{
$\text{MSE}_{N} = \underbrace{\mathbb{E}_{Y}[(X_{M+N} - u_{M + N})^2|Y]}_\text{\small Noise, Z(N)} + \underbrace{\mathbb{E}_{Y}[(u_{M + N} - \mathbb{E}_{\Theta} [f(Y,\theta, N)])^2]}_\text{\small Bias, B(N)} + \underbrace{\mathbb{E}_{Y, \Theta}[(f(Y,\theta, N) - \mathbb{E}_{\Theta} [f(Y,\theta, N)])^2]}_\text{\small Variance, V(N)}.$} \label{theorem_eq}
\end{align}

The first term $Z(N)$ in \eqref{theorem_eq} is irreducible noise, which does not depend on the forecasting strategy used. The second term $B(N)$ is the forecasting bias and the third term $V(N)$ is the forecasting variance. Both $B(N)$ and $V(N)$ depend on the employed forecasting strategy and tend to grow with the prediction horizon $N$. The ideal configuration is to have a low bias and a low variance. However, this is never achieved in practice as decreasing bias will increase the variance and vice versa. Hence, a good forecasting strategy is to better balance bias and variance, so as to obtain the smallest MSE.

\begin{prop}\label{prop:1}
Let $S$ be the sum of bias and variance terms, we have $S_{dir} = $ $B_{dir}(N)$ + $V_{dir}(N)$ for direct forecasting and $S_{iter}$ = $B_{iter}(N)$ + $V_{iter}(N)$ for iterative forecasting. For GenF, let $Y_{M-L} = \{X_{L+1}, \cdots, X_{M}\}$ and $Y_{L} = \{{X}_{M+1}, \cdots, {X}_{M+L}\}$ be the past observations and let $\widetilde{Y}_{L} = \{\widetilde{X}_{M+1}, \cdots, \widetilde{X}_{M+L}\}$ be the generated synthetic data for the next $L$ time steps. Let $\gamma(\theta, N-L) =f(\{Y_{M-L},\widetilde{Y}_{L}\},\theta,N-L)-f(\{Y_{M-L},{Y}_{L}\},\theta,N-L) $. We then have, 


\end{prop}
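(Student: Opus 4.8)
The plan is to write the GenF forecast as a perturbation of a counterfactual direct forecast that is fed the true future values in place of the synthetic ones, and then to push the bias--variance decomposition \eqref{theorem_eq} through this perturbation. Write $g_s = f(\{Y_{M-L},\widetilde{Y}_{L}\},\theta,N-L)$ for the actual GenF forecast and $g_t = f(\{Y_{M-L},{Y}_{L}\},\theta,N-L)$ for the same predictor evaluated on the genuine observations $Y_L=\{X_{M+1},\dots,X_{M+L}\}$, so that by the definition of $\gamma$ we have $g_s = g_t + \gamma(\theta,N-L)$. Both $g_s$ and $g_t$ forecast the quantity whose conditional mean is $u_{M+N}$, hence $B_{genf}(N)$ and $V_{genf}(N)$ are exactly the bias and variance of $g_s$ as defined in \eqref{theorem_eq}.

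First I would treat the bias. Substituting $g_s = g_t+\gamma$ gives $u_{M+N}-\mathbb{E}_{\Theta}[g_s] = (u_{M+N}-\mathbb{E}_{\Theta}[g_t]) - \mathbb{E}_{\Theta}[\gamma]$; squaring, taking $\mathbb{E}_Y$, and using $(a-b)^2 \le 2a^2+2b^2$ bounds $B_{genf}(N)$ by $2\,\mathbb{E}_Y[(u_{M+N}-\mathbb{E}_{\Theta}[g_t])^2] + 2\,\mathbb{E}_Y[(\mathbb{E}_{\Theta}[\gamma])^2]$. Next I would treat the variance: substituting $g_s-\mathbb{E}_{\Theta}[g_s] = (g_t-\mathbb{E}_{\Theta}[g_t]) + (\gamma-\mathbb{E}_{\Theta}[\gamma])$ and applying the same elementary inequality bounds $V_{genf}(N)$ by $2\,\mathbb{E}_{Y,\Theta}[(g_t-\mathbb{E}_{\Theta}[g_t])^2] + 2\,\mathbb{E}_{Y,\Theta}[(\gamma-\mathbb{E}_{\Theta}[\gamma])^2]$.

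The third step is to recognize the $g_t$-terms. The window $\{Y_{M-L},Y_L\}=\{X_{L+1},\dots,X_{M+L}\}$ has the same length $M$ as the original observation window and is merely a time shift of it, while $g_t$ predicts $N-L$ steps ahead of its right endpoint $X_{M+L}$. Under the stationarity-type assumption (implicit already in the decomposition \eqref{theorem_eq}) that the forecast bias and variance depend only on the horizon, the two $g_t$-terms equal $B_{dir}(N-L)$ and $V_{dir}(N-L)$. Then I would collapse the two $\gamma$-contributions using the conditional-variance identity $\mathbb{E}_Y[(\mathbb{E}_{\Theta}[\gamma])^2] + \mathbb{E}_{Y,\Theta}[(\gamma-\mathbb{E}_{\Theta}[\gamma])^2] = \mathbb{E}_{Y,\Theta}[\gamma^2]$, arriving at $S_{genf}(N) = B_{genf}(N)+V_{genf}(N) \le 2\big(B_{dir}(N-L)+V_{dir}(N-L)\big) + 2\,\mathbb{E}_{Y,\Theta}[\gamma(\theta,N-L)^2]$, where the factor $2$ can be traded for a sharper constant if one keeps the cross terms rather than bounding them.

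The main obstacle I anticipate is making step three rigorous: one must state precisely the stationarity/homogeneity assumption on the data distribution and on the set $\Theta$ of trained parameters so that evaluating the predictor on a time-shifted length-$M$ window reproduces the bias and variance of ordinary direct forecasting at horizon $N-L$. Everything else — the two invocations of $(a-b)^2\le 2a^2+2b^2$ and the conditional-variance bookkeeping — is routine, although one must be careful to keep $\mathbb{E}_Y$ and $\mathbb{E}_{\Theta}$ in the correct order since $\gamma$ depends on $Y$ through $Y_{M-L}$, $Y_L$ and the generated $\widetilde{Y}_L$, as well as on $\theta$. If the proposition additionally unfolds $\mathbb{E}_{Y,\Theta}[\gamma^2]$ in terms of $B_{iter}$ and $V_{iter}$ of the $L$-step synthetic generation, that step would require an extra Lipschitz/smoothness assumption on $f$ in its input argument, to pass from $|\gamma|$ to $\|\widetilde{Y}_L-Y_L\|$ and then to the accumulated iterative forecasting error.
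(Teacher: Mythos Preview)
Your approach and the paper's share the same starting idea---write the GenF forecast as the counterfactual direct forecast $g_t$ plus the perturbation $\gamma$---but diverge in how the cross term is handled. The paper works directly with the (noise-free) mean-squared error $\mathbb{E}_{Y,\theta}[(g_s - X_{M+N})^2]$, inserts $\pm g_t$, and then \emph{drops} the cross term by asserting that the iterative-forecast error $g_s-g_t=\gamma$ and the direct-forecast error $g_t-X_{M+N}$ are independent. This yields the exact equality
\[
S_{GenF}=\mathbb{E}_{Y,\theta}[\gamma(\theta,N-L)^2]+B_{dir}(N-L)+V_{dir}(N-L),
\]
with no factor $2$.

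Your route---splitting into $B$ and $V$ separately and invoking $(a+b)^2\le 2a^2+2b^2$---is more elementary and avoids any independence assumption, but it only delivers an inequality with the extra factor $2$, which is strictly weaker than what the proposition asserts. So your argument is a correct upper bound, just not the stated identity; to recover the identity you would need to keep the cross terms and argue (as the paper does, somewhat informally) that they vanish in expectation. Your closing remarks about needing Lipschitz control to pass from $\mathbb{E}[\gamma^2]$ to $B_{iter}(L)+V_{iter}(L)$ are well-placed: the appendix version of the proposition in fact stops at $\mathbb{E}[\gamma^2]$ and defers that Lipschitz step to the theorem, exactly as you anticipated.
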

\vspace*{-5mm}\begin{align}
    S_{GenF} = \underbrace{B_{iter} (L) + V_{iter} (L)}_\text{\small Iterative Forecasting} + \underbrace{B_{dir} (N - L) + V_{dir} (N - L) + \mathbb{E}_{\theta \sim \Theta}[\gamma(\theta, N-L)^2]}_\text{\small Direct Forecasting} . \label{eq2}
\end{align}

The proposition yields the joint bias and variance of the proposed GenF method. Using this breakdown, the following Theorem and Corollary provide error bounds using this framework. 


\begin{theorem}\label{thm:1}

We consider the direct forecasting with parameters $\theta_D$, iterative forecasting with parameters $\theta_I$, and the proposed GenF in Proposition \eqref{prop:1}. Assume that the ground truth realization of the forecasting process can be modelled by some $\theta_D^*$ and $\theta_I^*$, and after training, the estimated parameters follow $\theta_D \sim \mathcal{N}(\theta_D^*,\sigma_D^2)$ and $\theta_I \sim \mathcal{N}(\theta_I^*,\sigma_I^2)$. Assume that the iterative forecasting function is 2nd-order $L_1,L_2$-Lipschitz continuous, and the direct forecasting function is first order Lipschitz continuous. Let us denote quadratic recurrence relations of the form $b_{\alpha}(k+1)= b_{\alpha}(k)\left(L_1+1+b_{\alpha}(k)L_2 \right),$ where $b_{\alpha}(1)=\alpha \sigma_I^2$, for any $\alpha\geq0$. Assume that iterative forecasting has zero variance and direct forecasting has zero bias. Then, for some constants $\beta_0,\beta_1,\beta_2\geq0$, which represent the Lipschitz constants of the direct forecasting function, we have $S_{dir} \leq  U_{dir}$, $S_{iter} \leq U_{iter}$ and $S_{GenF} \leq U_{GenF}$, where $U_{dir} =  (N-1)\beta_1 + \sigma_{D}^2\beta_2$, $U_{iter} = b_{\alpha}(N)^2$, and $U_{GenF} = b_{\alpha}(L)^2(\beta_0) + (N-L-1)\beta_1 + \sigma_{D}^2\beta_2$. The quantities $\alpha$ and $\beta_0,\beta_1,\beta_2$ depend on the iterative and direct forecasting functions respectively.

\end{theorem}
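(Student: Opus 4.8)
The plan is to handle the three upper bounds separately, in each case reducing to the bias--variance decomposition \eqref{theorem_eq} together with Proposition~\ref{prop:1}, and then imposing the two simplifying hypotheses. Setting $V_{iter}\equiv 0$ and $B_{dir}\equiv 0$ collapses the quantities of interest to $S_{dir}=V_{dir}(N)$, $S_{iter}=B_{iter}(N)$, and, after substituting $V_{iter}(L)=0$ and $B_{dir}(N-L)=0$ into \eqref{eq2}, $S_{GenF}=B_{iter}(L)+V_{dir}(N-L)+\mathbb{E}_{\theta\sim\Theta}[\gamma(\theta,N-L)^2]$. Thus it suffices to (a) control the iterative bias $B_{iter}(\cdot)$ by the recurrence $b_\alpha(\cdot)^2$, (b) control the direct variance $V_{dir}(\cdot)$, and (c) control the extra cross term $\mathbb{E}[\gamma^2]$; the three bounds $U_{iter},U_{dir},U_{GenF}$ then drop out by assembling these pieces at the relevant horizons ($N$ for iterative and direct, and $L$ and $N-L$ for GenF).

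For (a), I would let $b_\alpha(k)$ denote an upper bound on the magnitude of the $k$-step iterative \emph{bias} and argue by induction on $k$. The base case $b_\alpha(1)=\alpha\sigma_I^2$ follows from a second-order expansion of the one-step map about the ground-truth parameter $\theta_I^*$: because $\theta_I\sim\mathcal{N}(\theta_I^*,\sigma_I^2)$ is a mean-zero perturbation the first-order term integrates out, so the mean one-step prediction deviates from $u_{M+1}$ by an amount of order $\sigma_I^2$, with $\alpha$ absorbing the second-order Lipschitz constant $L_2$ and any curvature constant. For the inductive step, propagating a history that already carries bias $b_\alpha(k)$ through the $2$nd-order $L_1,L_2$-Lipschitz map produces a new bias bounded by the linear amplification $L_1 b_\alpha(k)$, the carried-over term $b_\alpha(k)$ (this is the ``$+1$''), and the quadratic term $L_2 b_\alpha(k)^2$, i.e. $b_\alpha(k+1)\le b_\alpha(k)(L_1+1+b_\alpha(k)L_2)$. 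Since $B(N)$ is a squared deviation this gives $B_{iter}(N)\le b_\alpha(N)^2=U_{iter}$, and the same estimate at horizon $L$ gives $B_{iter}(L)\le b_\alpha(L)^2$.

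For (b), with $B_{dir}\equiv0$ we have $V_{dir}(N)=\mathbb{E}_Y[\mathrm{Var}_{\theta_D}f(Y,\theta_D,N)]$, driven only by the Gaussian parameter spread: a first-order Lipschitz bound in $\theta$ contributes the $\sigma_D^2\beta_2$ term, while the horizon-dependent growth of the direct forecaster's sensitivity (the ``variance grows with $N$'' phenomenon recalled in Section~\ref{RW}) contributes the additive $(N-1)\beta_1$, so $V_{dir}(N)\le(N-1)\beta_1+\sigma_D^2\beta_2=U_{dir}$ and likewise $V_{dir}(N-L)\le(N-L-1)\beta_1+\sigma_D^2\beta_2$. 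For (c), $\gamma(\theta,N-L)$ is the difference of the horizon-$(N-L)$ direct forecast evaluated at two inputs that agree except in their last $L$ coordinates (the synthetic $\widetilde{Y}_L$ versus the true $Y_L$); first-order Lipschitz continuity of the direct forecaster in its input bounds $|\gamma|$ by a constant times $\|\widetilde{Y}_L-Y_L\|$, and $\mathbb{E}\|\widetilde{Y}_L-Y_L\|^2$ is dominated by the accumulated iterative bias $\sum_{j=1}^{L}B_{iter}(j)\le\sum_{j=1}^{L}b_\alpha(j)^2$. Because the recurrence forces at least geometric growth of $b_\alpha$ (the factor $L_1+1>1$), this sum is at most a constant multiple of $b_\alpha(L)^2$, so $\mathbb{E}[\gamma^2]\le C\,b_\alpha(L)^2$. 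Adding this to $B_{iter}(L)\le b_\alpha(L)^2$ and to the $V_{dir}(N-L)$ bound yields $S_{GenF}\le\beta_0\,b_\alpha(L)^2+(N-L-1)\beta_1+\sigma_D^2\beta_2=U_{GenF}$, with $\beta_0=1+C$ collecting the unit term and the squared input-Lipschitz constant of the direct forecaster.

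The main obstacle is part (a): making the error-propagation recurrence rigorous requires carefully separating, at each iterate, the deterministic transport of the previous bias through the nonlinear map from the fresh mean-zero parameter perturbation, and checking that the second-order Lipschitz estimate remains the controlling effect uniformly along the trajectory, so that $b_\alpha(k+1)=b_\alpha(k)(L_1+1+b_\alpha(k)L_2)$ is a genuine rather than merely formal upper bound. One must also verify monotonicity and geometric growth of $b_\alpha(\cdot)$, which is what lets the single quantity $b_\alpha(L)^2$ absorb the entire $L$-step trajectory error in the GenF cross term while keeping $\beta_0$ independent of $L$ and $N$.
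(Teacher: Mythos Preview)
Your proposal is correct and mirrors the paper's argument in all essentials: the same inductive propagation of the iterative bias to obtain the recurrence $b_\alpha$, the same two-step Lipschitz control of $V_{dir}$ (Lipschitz in the horizon giving the $(N-1)\beta_1$ term, then Lipschitz in $\theta_D$ at horizon $1$ giving $\sigma_D^2\beta_2$), and the same input-Lipschitz estimate on the cross term $\gamma$. The only bookkeeping difference is that the paper bounds $\mathbb{E}[\gamma^2]$ directly by $(L\beta_0')^2\,b_\alpha(L)^2$ and sets $\beta_0=(L\beta_0')^2$, whereas your geometric-sum argument on $\sum_{j\le L}b_\alpha(j)^2$ delivers an $L$-independent $\beta_0$---arguably cleaner and more consistent with the theorem's description of $\beta_0$ as a fixed Lipschitz constant.
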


\begin{corollary}\label{corr:1}

$U_{dir}$, $U_{iter}$ and $U_{GenF}$ are as defined in Theorem \ref{thm:1}.
When  $\beta_0<\min\{\beta_1/b_{\alpha}(1)^2,(b_{\alpha}(N)^2-\sigma_D^2\beta_2)/b_{\alpha}(N-1)^2\}$, we have that $U_{GenF} < U_{iter}$ and $U_{GenF} < U_{dir}$, for some $0<L<N$. Furthermore, when $(N-1)\beta_1 + \sigma_{D}^2\beta_2 \approx  b_{\alpha}(N)^2$, we have $U_{GenF} < U_{iter}$ and $U_{GenF} < U_{dir}$, for \textit{any} choice of $0<L<N$.
\end{corollary}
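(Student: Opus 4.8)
The plan is to treat $U_{GenF}$ as a function of the synthetic-window length $L$ and to use the identity $U_{GenF}(L)=U_{dir}-L\beta_1+b_\alpha(L)^2\beta_0$, which is immediate from the definitions in Theorem~\ref{thm:1}. Under this rewriting, $U_{GenF}(L)<U_{dir}$ is \emph{equivalent} to the scalar inequality $b_\alpha(L)^2\beta_0<L\beta_1$, and the two entries of the $\min$ in the hypothesis become interpretable: $\beta_0<\beta_1/b_\alpha(1)^2$ says precisely that this inequality holds at $L=1$, while $\beta_0<(b_\alpha(N)^2-\sigma_D^2\beta_2)/b_\alpha(N-1)^2$ says precisely that $U_{GenF}(N-1)=b_\alpha(N-1)^2\beta_0+\sigma_D^2\beta_2<b_\alpha(N)^2=U_{iter}$, since the direct-forecasting horizon $N-L-1$ vanishes at $L=N-1$.

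For the first claim I would argue by cases on the order of $U_{dir}$ and $U_{iter}$. If $U_{dir}\le U_{iter}$, choose $L=1$: the first $\min$-term gives $U_{GenF}(1)<U_{dir}\le U_{iter}$, so GenF beats both. If $U_{iter}\le U_{dir}$, choose $L=N-1$: the second $\min$-term gives $U_{GenF}(N-1)<U_{iter}\le U_{dir}$, so again GenF beats both. In either case the witness lies in $\{1,N-1\}\subseteq(0,N)$, which establishes the ``for some $0<L<N$'' statement.

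For the second claim, under $U_{dir}\approx U_{iter}$ the goal strengthens to showing $b_\alpha(L)^2\beta_0<L\beta_1$ for \emph{every} $L\in\{1,\dots,N-1\}$, i.e.\ $\beta_0<\min_{1\le L\le N-1}L\beta_1/b_\alpha(L)^2$, and the key step is to locate this minimum. From the recurrence $b_\alpha(k+1)=b_\alpha(k)(L_1+1+b_\alpha(k)L_2)$ with $L_1,L_2\ge0$, the per-step squared growth ratio $(L_1+1+b_\alpha(L)L_2)^2$ is non-decreasing, so once it exceeds the bounded factor $(L+1)/L\le2$ the sequence $b_\alpha(L)^2/L$ is non-decreasing and $\min_{L}L/b_\alpha(L)^2$ is attained at the right endpoint $L=N-1$. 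The hypothesis then reads $\beta_0<(b_\alpha(N)^2-\sigma_D^2\beta_2)/b_\alpha(N-1)^2\approx(N-1)\beta_1/b_\alpha(N-1)^2$ (using $b_\alpha(N)^2=U_{iter}\approx U_{dir}$), which is exactly this endpoint bound; hence $b_\alpha(L)^2\beta_0<L\beta_1$ for all admissible $L$, giving $U_{GenF}(L)<U_{dir}\approx U_{iter}$ for every $0<L<N$.

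I expect the crux to be the second claim: the endpoint/monotonicity argument for $\min_{1\le L\le N-1}L/b_\alpha(L)^2$ is transparent only when the quadratic recurrence grows quickly from the first step, and the slow-growth regime (small $L_1$ and small $b_\alpha(1)L_2$) needs a separate treatment -- there $b_\alpha(L)^2/L$ is decreasing, the minimum sits at $L=1$, and it is already controlled by the first $\min$-term, so the two regimes can be glued. The first claim, by contrast, reduces to a two-line case split once the identity $U_{GenF}(L)=U_{dir}-L\beta_1+b_\alpha(L)^2\beta_0$ is in hand.
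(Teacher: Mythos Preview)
Your approach to the first claim is essentially the paper's: both reduce the two entries of the $\min$ to the endpoint inequalities $U_{GenF}(1)<U_{dir}$ and $U_{GenF}(N-1)<U_{iter}$. Your case split on the order of $U_{dir}$ and $U_{iter}$ to exhibit a \emph{single} $L$ beating both is a genuine addition---the paper simply records the two endpoint inequalities and stops, leaving the existence of a common witness implicit.

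For the second claim the routes diverge more. The paper's argument is little more than an assertion: it notes that $b_\alpha(L)^2\beta_0$ is increasing and $(N-L-1)\beta_1+\sigma_D^2\beta_2$ is decreasing in $L$, then declares that when the endpoint values coincide and the $\min$-hypothesis holds, $U_{GenF}(L)$ lies below both for every $0<L<N$. Your unimodality argument---observing that the squared growth ratio of $b_\alpha$ is non-decreasing while $(L+1)/L$ is decreasing, so $b_\alpha(L)^2/L$ is first non-increasing then non-decreasing and hence maximized at an endpoint---actually supplies the missing mechanism. Since the two endpoints are exactly what the two $\min$-terms control (the right one via $U_{dir}\approx U_{iter}$), your ``glue'' of the two regimes closes the argument. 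What you gain is a proof that is self-contained rather than heuristic; what the paper gains is brevity, at the cost of leaving the reader to reconstruct precisely the monotonicity step you wrote out.
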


\begin{remark}
Theorem \ref{thm:1} provides upper bounds on the sum of bias and variance terms, and Corollary \ref{corr:1} provides the conditions under which the upper bounds for GenF are provably lower. One of the possible scenarios these conditions are satisfied is when the iterative forecaster has low single-horizon bias (i.e., $b_{\alpha}(1)$), and a large prediction horizon $N$. Corollary \ref{corr:1} suggests that under certain conditions, GenF can provide potentially smaller MSE, which is verified in our experiments in Section \ref{PC}. Corollary \ref{corr:1} also gives conditions under which GenF yields a smaller upper bound on error for any $0<L<N$.
\end{remark} 


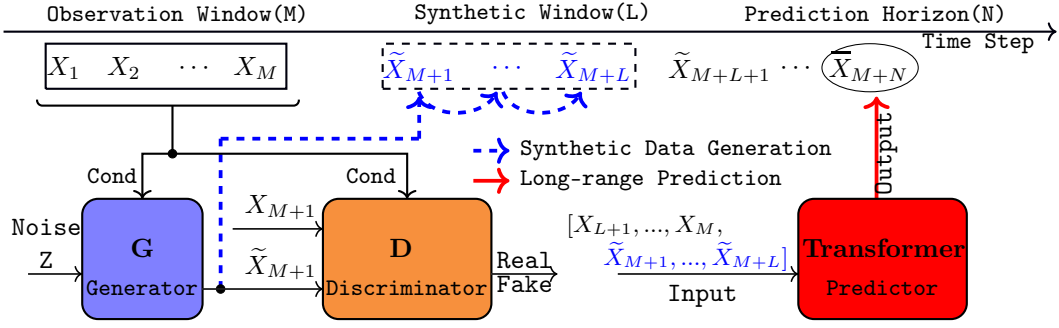
\begin{figure}[!t]
\begin{tikzpicture}[font=\ttfamily, scale = 0.8]
	\draw[->] [blue!5!black, thick] (-1, 2.6) -- (16.5, 2.6);	
	\node at (0,2) (c) {$X_{1}$};
	\node at (1,2) (c) {$X_{2}$};
	\node at (2.2,2) (c) {$\cdots$};
	\node at (3.2,2) (c) {$X_{M}$};
	\node[text = blue] at (5.9,2) (c) {$\widetilde{X}_{M+1}$};
	\node[text = blue] at (7.4,2) (c) {$\cdots$};
	\node[text = blue] at (8.8,2) (c) {$\widetilde{X}_{M+L}$};
	\node at (10.9,2) (c) {$\widetilde{X}_{M+L+1}$};
	\node at (12.2,2) (c) {$\cdots$};
	\node at (13.4,2) (c) {$\xbar{X}_{M+N}$};
	\draw (13.4,2) ellipse (0.8cm and 0.4cm);
	
	\node [black] at (1.9,2.9) (c) {\small Observation Window(M)};
	\node [black] at (7.8,2.9) (c) {\footnotesize Synthetic Window(L)};
	\node [black] at (13.5,2.9) (c) {\footnotesize Prediction Horizon(N)};
	\node [black] at (15.2,2.4) (c) { \small Time Step};
	\draw [blue!5!black, thick](-0.3, 1.65) -- (3.7, 1.65) -- (3.7, 2.4) -- (-0.3, 2.4) -- (-0.3, 1.65);	
	\draw [blue!5!black, dashed, thick](5.3, 1.65) -- (9.5, 1.65) -- (9.5, 2.4) -- (5.3, 2.4) -- (5.3, 1.65);

    \draw [thick,->,Blue,dashed, line width=0.5mm](5.9,1.6) [out=-70,in=-120] to  (7.2,1.6);
    \draw [thick,->,Blue,dashed, line width=0.5mm](7.3,1.6) [out=-70,in=-120] to  (8.6,1.6);
    
 \end{tikzpicture}
    
\vspace{-4mm}\begin{tikzpicture}[font=\ttfamily,scale=0.8]
   \draw [thick,Blue,dashed, line width=0.5mm](-0.7,-0.5) to  (-0.7,2.0);
   \draw [thick,Blue,dashed, line width=0.5mm](-0.7, 2.0) to  (2.7,2.0); 
   \draw [thick,->, Blue,dashed, line width=0.5mm](2.6, 2.0) to  (2.6,2.7); 
   \draw [thick,->, Red, line width=0.5mm](10.2,0.5) to  (10.2,2.7);

   \draw[fill]  (-0.7,-0.5) circle (2pt);
   \draw[fill]  (-1.5, 1.75) circle (2pt);
   
    \draw[thick,black,->,Blue,dashed, line width=0.5mm] (3.5,1.8)--(4.1,1.8)node [right,black]{\footnotesize Synthetic Data Generation};
    \draw[thick,black,->,Red, line width=0.5mm] (3.5,1.3)--(4.1,1.3)node [right,black]{\footnotesize Long-range Prediction};
   
   \draw[line width=0.8pt,draw=black,fill=Blue!50,rounded corners=6pt](-3,-1) rectangle (-1,1) node [left=5cm,above=-0.9cm]{$\hspace{-16mm}\mathbf{G}$};
   \draw[line width=0.8pt,draw=black,fill=Orange_1,rounded corners=6pt](1,-1) rectangle (3.8,1) node [left=1.4cm,above=-0.9cm]{$\hspace{-24mm}\mathbf{D}$};
   
   \draw[line width=0.8pt,draw=black,fill=Red,rounded corners=6pt](8.9,-1) rectangle (11.7,1) node [left=1.4cm,above=-0.9cm]{$\footnotesize \hspace{-22mm} \mathbf{Transformer}$};

  \draw[line width=0.8pt, draw=black, <->](-2, 1)--(-2,1.75) -- (2.4, 1.75) -- (2.4, 1);
  
  \node[above=0.2cm]at (-2,-1){\footnotesize Generator};
  \node[above=0.2cm]at (2.4,-1){\footnotesize Discriminator};  
  \node[above=0.2cm]at (10.3,-1){\footnotesize Predictor};  
  \node[above=0.2cm]at (7.3,-1.2){Input};  
  \node[above=0.2cm, rotate = 90]at (10.7,1.5){Output};  
   
  \draw[line width=0.8pt,draw=black,rounded corners=2pt](-1.5,1.75)--(-1.5,2.55)node[right = 1.5cm, above=-1cm]{};
  
  \node[above=0cm]at (-2.5,1.1){{\small Cond}};
  \node[above=0cm]at (1.8,1.1){{\small Cond}};

  \draw[line width=0.8pt,draw=black,fill=white,rounded corners=2pt](-3.75,2.65)--(-3.75,2.55)--(0.6,2.55)--(0.6,2.65);
      
   \node[above]at (-3.6,0.25){Noise};
   \node[above]at (-3.6,-0.3){Z};
   \node[above]at (0.3, 0.5){$X_{M+1}$};
   \node[above]at (0.3,-0.5){$\widetilde{X}_{M+1}$};
   \node[above]at (4.35,-0.3){Real};
     
  \draw[line width=0.6pt,draw=black,fill=white,->](-3.9,-0.25)--(-3,-0.25);
  \draw[line width=0.6pt,draw=black,fill=white,->](-1,-0.5)--(1,-0.5);
  \draw[line width=0.6pt,draw=black,fill=white,->](-0.5,0.5)--(1,0.5);
  \draw[line width=0.6pt,draw=black,fill=white,->](3.8,-0.25)--(4.9,-0.25);
  
  \node[above]at (4.35,-0.7){Fake};
  
  \node[above]at (6.35, 0.2){\footnotesize $[X_{L+1},..., X_{M},$};
  \node[above, text = blue]at (7.2, -0.35){\footnotesize $\widetilde{X}_{M+1},..., \widetilde{X}_{M+L}]$};

  \draw[line width=0.6pt,draw=black,fill=white,->](5.9,-0.25)--(8.9,-0.25);
  
  \end{tikzpicture}
  \caption{Implementation of GenF (Left: CWGAN-TS; Right: Transformer based Predictor).}
  \label{GenF_Imp}
  \vspace{-3mm}
\end{figure}

\subsection{Implementation of GenF}
\label{sec3.3}

We now detail the implementation of GenF via three components as follows.

\noindent {\bf 1) CWGAN-TS: Synthetic Data Generation. } For the iterative forecasting part (i.e., $B_{iter}(L) + V_{iter}(L)$) of GenF in \eqref{eq2}, the bias term ($B_{iter}(L)$) tends to be more dominant due to the nature of iterative forecasting. Therefore, we should select a low bias model for the iterative forecasting part in \eqref{eq2}, so as to obtain a smaller MSE. A recent work \cite{zhao2018bias} evaluates the generative bias of a diver range of models and their results suggest that GAN based models tend to have a relatively lower generative bias then other models studied (e.g., VAE). Similar results are reported in \cite{hu2019exploring} as well. This motivates the use of a GAN based model in our proposed GenF for synthetic data generation.

We propose a Conditional Wasserstein GAN for synthetic time series data generation \cite{arjovsky2017wasserstein,mirza2014conditional,gulrajani2017improved}, called CWGAN-TS. The the unsupervised loss of CWGAN-TS is summarized as follows.
\vspace*{3mm}\begin{align} \label{loss_1}
\mathcal{L}_{U} & = \underset{\bar{X}_{M+1}\sim P_g}{\mathbb{E}}\!\!\!\!\!\!\!\![D(\xbar{X}_{M+1}|Y)] - \underset{X_{M+1} \sim P_r}{\mathbb{E}}\!\!\!\!\!\!\!\![D(X_{M+1}|Y)]
+ \underset{\widehat{X}_{M+1} \sim P_{\widehat{X}}}{\lambda\mathbb{E}}\!\!\!\!\!\!\!\![(||\nabla_{\widehat{X}} D(\widehat{X}_{M+1}|Y)||_{2} - 1)^2],
\vspace*{3mm}\end{align}
where $Y = \{X_1, \cdots, X_M\}$ is the condition, $X_{M+1} \sim P_r$ is the real data at time step $M+1$, $\xbar{X}_{M+1} \sim P_g$ is the generated data for time step $M+1$ and $\widehat{X}_{M+1} \sim P_{\widehat{X}}$ is sampled from $\xbar{X}_{M+1}$ and $X_{M+1}$ with $\epsilon$ uniformly sampled between 0 and 1, i.e., $\widehat{X}_{M+1} = \epsilon \xbar{X}_{M+1} + (1 - \epsilon) X_{M+1}$ with 0 $\leq \epsilon \leq $1. The first two terms in \eqref{loss_1} correspond to Wasserstein distance which improves learning performance over the Jensen-Shannon divergence used in the original GAN. The third term is the gradient penalty, corresponding to the 1-Lipschitz constraint \cite{gulrajani2017improved}. To control the generation of synthetic data, we apply two terms to CWGAN-TS during training: (i) the past observation $Y$ as the condition. (ii) an additional supervised loss as follows.
\vspace*{2mm}\begin{align} \label{loss_2}
\mathcal{L}_{S} = \mathcal{L}_{U} + \eta ||X_{M+1} - \xbar{X}_{M+1}||_{2},
\vspace*{3mm}\end{align}
where $\eta$ is a hyperparameter that balances the two losses. Importantly, this is, in addition to the unsupervised minmax game played over classification accuracy, the generator additionally minimizes the supervised loss. By combining the objectives in this manner, CWGAN-TS aims to generate accurate synthetic data while preserving the temporal dynamics between conditioning input and generated data. The loss function of CWGAN-TS is further evaluated via an ablation study.


\noindent {\bf 2) Transformer: Long-range Prediction.} Several recent works \cite{li2019enhancing,informer,wu2020adversarial,wu2020deep,farsani2021transformer,lim2021temporal,tang2021probabilistic} have shown the great potential of transformer in time series forecasting. We briefly introduce its architecture here and refer interested readers to \cite{vaswani2017attention} for more details.

In the self-attention layer, a multi-head self-attention sublayer takes input {\bf Y} (i.e., time series data and its positioning vectors) and computes the attention function simultaneously using query matrices: {\bf Q$_h$ = YW$^Q_h$}, key matrices {\bf K$_h$ = YW$^K_h$}, and value matrices {\bf V$_h$ = YW$^V_h$}. Note that {\bf W$^Q_h$}, {\bf W$^K_h$} and {\bf W$^V_h$} are learnable parameters. The scaled dot-product attention computes a sequence of vector outputs:
\begin{align}
{\bf O}_h  &= {\mathrm{Attention}}({\bf Q}_h, {\bf K}_h, {\bf V}_h) = {\mathrm{softmax}}(\frac{{\bf Q}_h {\bf K}_h^T}{\sqrt{d_k}} \cdot {\bf M}) {\bf V}_h
\end{align}
where $\frac{1}{\sqrt{d_k}}$ is a scaled factor and {\bf M} is a mask matrix. In GenF, we concatenate {\bf O}$_h$ ($h = 1, \cdots, H$ and $H$ is number of heads) and pass the concatenation through several fully connected layers before arriving at the final output. The architecture details can be found in Table \ref{Fig:MI} in the Appendix.

\noindent {\bf 3) The ITC Algorithm.} The datasets studied in this paper contain time series data for different patients, countries and so on. In the following, we refer to them as units. Since GenF contains two steps, synthetic data generation and long-range forecasting, it requires two independent datasets: datasets $\mathbb{G}$ and $\mathbb{P}$ to train the CWGAN-TS and the transformer based predictor, respectively. It is possible to randomly split the entire training dataset at unit level into datasets $\mathbb{G}$ and $\mathbb{P}$, but the resulting datasets may not represent the entire training dataset well. We address this issue by suggesting an information theoretic clustering (ITC) algorithm based on Mutual Information (MI), denoted by $I (X;Y)$, which is a well-studied measure from information theory \cite{cover2006elements} that quantifies the dependency between random variables $X$ and $Y$. 

The ITC algorithm aims to select representative training subsets to better train the CWGAN-TS and the transformer based predictor. It consists of three steps:(i) Assign a score to each unit via the scoring function $J(P_i) = \sum_{P_j \in \mathbb{D}, P_j \neq P_i }^{} I(P_i, P_j)$, where $P_i$ refers to the candidate unit and $\mathbb{D} = \{P_1, P_2, \cdots  \}$ is the dataset containing all units. To estimate MI (i.e., $I(P_i, P_j)$), we use a nearest neighbor based approach called KSG estimator \cite{Kraskov2003} as all features studied are continuous variables. (ii) Divide all units into $\gamma$ groups based on the descending order of all scores, where $\gamma$ is a tunable parameter. The units with similar scores will be grouped together and units within the same group tend to be highly dependent on each other. (iii) Randomly sample from each subgroup. This is equivalent to selecting representative units of each subgroup. Random proportional sampling from all groups gives a new training dataset $\mathbb{G}$ and the remaining units form the new training dataset $\mathbb{P}$. In such a manner, we argue that we can select more representative units for better training.

\section{Performance Evaluation}
\label{PE}
We summarize the dataset information in Section \ref{DataDes} and describe the experimental setup in Section \ref{ES}. Next, in Section \ref{PC}, we compare the performance of GenF to benchmark methods. Lastly, we conduct an ablation study to evaluate the effectiveness of our framework in Section \ref{SD}.

\subsection{Real-world Datasets}
\label{DataDes}
We shortlist five public time series datasets comprising different time series patterns (e.g., periodical, monotonic) and time intervals (e.g., hourly, daily, annual) from three popular domains (healthcare, environment and energy consumption).
We summarize each dataset as follows.
\begin{enumerate}[itemsep = 0mm,leftmargin=5mm, topsep=0pt]

\item The Vital Sign dataset from MIMIC-III \cite{MIMIC}, which contains 500 patients in the MIMIC-III Clinical database. For each patient, we extract 6 features: heart rate (bpm), respiratory rate, blood oxygen saturation (\%), body temperature ($\degree$F), systolic and diastolic blood pressure (mmHg). The vital signs are recorded at a hourly interval over a duration of 144 hours on average. 

\item The Multi-Site Air Quality dataset from UCI \cite{Dua:2019}, which includes air pollutants data from 12 sites. For each site, we extract the hourly record of PM10, $\text{SO}_{2}$, $\text{NO}_{2}$, $\text{O}_{3}$, PM2.5 and CO. We note that all features are in units of $\text{ug/m}^3$ and each site has 35,000 records on average. 

\item The World Energy Consumption dataset from World Bank \cite{worldenergy2019} , which contains data from 128 countries and each country contains three annual energy consumption indicators: electricity (kWh/capita), fossil fuel (\% of total) and renewable energy (\% of total) from 1971 to 2014.

\item The Greenhouse Gas dataset from UCI \cite{Dua:2019}, which records the greenhouse gas concentrations (ppm) at 2921 locations. The data points are spaced 6 hours apart over a period of three months.

\item The Household Electricity Consumption dataset \cite{energy}, which contains the cumulative daily electricity consumption (kWh) for 995 households over a duration of three years.

\end{enumerate}

For each dataset, a small amount (i.e., less than 5\%) of missing values are imputed using the last historical readings. Moreover, we scale all variables to [0,1] and reshape all scaled data via a sliding window, resulting in a dataset $\mathbb{D} \in \mathbb{R}^{S \times M \times K}$, where $S$ is the number of samples, $K$ is number of features and $M$ is the observation window length.

\subsection{Experiment Setup \& Parameter Tuning}
\label{ES}
In the experiment, the dataset $\mathbb{D}$ is randomly split into three subsets at unit level: training dataset $\mathbb{T} \in \mathbb{R}^{S_1 \times M \times K}$ (60\%), test dataset $\mathbb{Q} \in \mathbb{R}^{S_2 \times M \times K}$ (20\%) and validation dataset $\in \mathbb{R}^{S_3 \times M \times K}$ (20\%). We note that both GenF and the benchmark methods are trained using the training dataset $\mathbb{T}$, and the test dataset $\mathbb{Q}$ is used to evaluate the performance. {\bf To ensure fair comparison, we conduct grid search over all tunable hyper-parameters and possible configurations using the validation dataset}. We highlight that all methods share the same grid search range and step size. Specifically, some key parameters are tuned as follows. (i) the hidden state size of LSTM is tuned from 5 to 100 with step size of 5. (ii) the size of fully connected layers are tuned from 1 to 10 with step size of 1. (iii) the number of heads in mutli-head self-attention layer are tuned from 1 to 10 with step size of 1.

We now provide a list of tuned parameters for predicting systolic blood pressure using the MIMIC-III Vital Signs dataset. In terms of the classical models: (i) ARIMA (2,0,1) is trained using the past $M$ historical values of a single feature (e.g., systolic blood pressure). (ii) for the canonical transformer, it contains two encoder layers, two decoder layers and three heads in the multi-head self-attention layer. (iii) for LSTM, we stack two LSTM layers (each LSTM with hidden size of 10) and two fully connected layers (with size of 10 and 1, respectively). This LSTM based neural network is directly trained using training dataset $\mathbb{T}$ for 1000 epochs using Adam \cite{adam} with a learning rate of 0.001, in batches of 64. As for the five SOTA methods, we refer to the source code released by their authors and some key parameters are as follows: LSTNet has a skip-length of 5 and a gradient clipping of 10, TLSTM uses a dropout of 0.1 and LogSparse uses a kernel size of 9 and an embedding size of 16. In terms of GenF, we use the LSTM and fully connected layer to implement the CWGAN-TS. The transformer based predictor consists of two decoder layers and two encoder layers, with three heads. For the detailed algorithm of GenF and other training details on five SOTA methods, we refer interested reader to Algorithm \ref{algorithm2} and Table \ref{Fig:MI} in the Appendix. {\bf  Furthermore, we use Tesla V100 devices for our experiments, and the source code (including random seeds) will be released for reproducibility at the camera-ready stage.}

In the experiments, we compare the performance of GenF to two classical models and five SOTA methods. The Mean Squared Error (MSE) and Mean Absolute Error (MAE) are used to evaluate the performance, where the former captures both the variance and bias of the predictor, and the latter is useful to understand whether the size of the error is of concern or not. Furthermore, a scale invariant error metric called symmetric Mean Absolute Percentage Error (sMAPE) is also used. In Table \ref{performance_short}, we show the performance of several variants of GenF (i.e., GenF-3, GenF-6), where the 'X' in GenF-X represents the value of the synthetic window length $L$.

\begin{table}[!t]
\small
\centering
\setlength{\tabcolsep}{4.5pt}
{\renewcommand{\arraystretch}{0.85}
\begin{tabular}{ll|cc|cc|cc|ccc}
\toprule
\multicolumn{2}{c|}{{\fontfamily{lmtt}\selectfont Prediction Horizon}} & \multicolumn{2}{c|}{{\fontfamily{lmtt}\selectfont t + 8}} & \multicolumn{2}{c|}{{\fontfamily{lmtt}\selectfont t + 12}} & \multicolumn{2}{c|}{{\fontfamily{lmtt}\selectfont t + 30}} & \multicolumn{2}{c}{{\fontfamily{lmtt}\selectfont t + 60}} \\ \midrule
& {\fontfamily{lmtt}\selectfont Metrics} &{\fontfamily{lmtt}\selectfont MAE} & {\fontfamily{lmtt}\selectfont sMAPE} &{\fontfamily{lmtt}\selectfont MAE} & {\fontfamily{lmtt}\selectfont sMAPE} &{\fontfamily{lmtt}\selectfont MAE} & {\fontfamily{lmtt}\selectfont sMAPE} &{\fontfamily{lmtt}\selectfont MAE} & {\fontfamily{lmtt}\selectfont sMAPE}\\
\cmidrule(lr){1-2} \cmidrule(lr){3-4}\cmidrule(lr){5-6}\cmidrule(lr){7-8}\cmidrule(lr){9-10}

 & {\fontfamily{lmtt}\selectfont ARIMA}      & 8.3$\pm${\tiny 1.3} & 7.3$\pm${\tiny 1.1} & 9.8$\pm${\tiny 0.8} & 8.4$\pm${\tiny 0.7} & 16.2$\pm${\tiny 1.7} & 13.1$\pm${\tiny 1.4} & 18.9$\pm${\tiny 2.7} & 14.5$\pm${\tiny 2.4} \\  
 
 \parbox[t]{2mm}{\multirow{8}{*}{\rotatebox[origin=c]{90}{\vspace*{-5mm}{\bf SOTA}}}} & {\fontfamily{lmtt}\selectfont LSTM} & 7.3$\pm${\tiny 0.6} & 6.4$\pm${\tiny 0.3} & 8.9$\pm${\tiny 1.1} & 7.6$\pm${\tiny 0.9}  & 13.9$\pm${\tiny 1.3} & 11.0$\pm${\tiny 0.9}   & 17.8$\pm${\tiny 2.9} & 13.3$\pm${\tiny 1.9}    \\  \midrule

 & {\fontfamily{lmtt}\selectfont TLSTM}          & 6.8$\pm${\tiny 0.4}    & 5.7$\pm${\tiny 0.4} & 8.2$\pm${\tiny 0.6}   & 7.1$\pm${\tiny 0.8} & 12.3$\pm${\tiny 1.2}   & 10.9$\pm${\tiny 1.1} & 15.0$\pm${\tiny 2.3} & 12.7$\pm${\tiny 1.5}  \\ 
 & {\fontfamily{lmtt}\selectfont LSTNet}         & 6.9$\pm${\tiny 0.6}    & 5.8$\pm${\tiny 0.4} & 8.2$\pm${\tiny 0.7}  & 7.0$\pm${\tiny 0.7}& 12.0$\pm${\tiny 1.3}  & 10.6$\pm${\tiny 1.3} & 14.2$\pm${\tiny 2.6}  & 11.5$\pm${\tiny 0.9} \\ 
  & {\fontfamily{lmtt}\selectfont DeepAR}       & 6.8$\pm${\tiny 0.5} & 6.0$\pm${\tiny 0.6} & 8.4$\pm${\tiny 1.1} & 7.2$\pm${\tiny 0.9} & 12.9$\pm${\tiny 1.1} & 11.2$\pm${\tiny 0.9} & 16.3$\pm${\tiny 2.8} & 12.9$\pm${\tiny 1.7} \\ 
  & {\fontfamily{lmtt}\selectfont Informer}       & 6.5$\pm${\tiny 0.4} &  5.3$\pm${\tiny 0.5} & 7.9$\pm${\tiny 0.7} & 6.6$\pm${\tiny 0.7} & 11.8$\pm${\tiny 1.5} & 10.1$\pm${\tiny 1.4} & 14.1$\pm${\tiny 2.7} & 11.5$\pm${\tiny 1.6} \\ 
 & {\fontfamily{lmtt}\selectfont LogSparse}      & 6.6$\pm${\tiny 0.7}   & 5.5$\pm${\tiny 0.3} & 8.1$\pm${\tiny 0.5}   & 6.9$\pm${\tiny 0.6}  & 11.6$\pm${\tiny 1.0}  & 9.7$\pm${\tiny 0.9}  & 14.5$\pm${\tiny 2.8} & 11.3$\pm${\tiny 1.2}   \\ \midrule      
 & {\fontfamily{lmtt}\selectfont GenF-3 (Ours)}          & {\bf 6.2$\pm${\tiny 0.4}}  &  {\bf 5.1$\pm${\tiny 0.5}} & 7.5$\pm${\tiny 0.6}  & 6.3$\pm${\tiny 0.5}   & 11.2$\pm${\tiny 1.4} & 9.2$\pm${\tiny 1.0}  & 13.5$\pm${\tiny 2.7} & 10.8$\pm${\tiny 1.9}  \\ 
 & {\fontfamily{lmtt}\selectfont GenF-6 (Ours)}          & 6.3$\pm${\tiny 0.4}  &   5.2$\pm${\tiny 0.3} & {\bf 7.4$\pm${\tiny 0.7}}  & {\bf 6.2$\pm${\tiny 0.6}}   & {\bf 10.7$\pm${\tiny 1.1}} & {\bf 8.9$\pm${\tiny 0.9}}  & {\bf 12.6$\pm${\tiny 2.5}} &  {\bf 10.2$\pm${\tiny 1.4}}  \\ 
 
\bottomrule
\end{tabular}}
\caption{Performance (MAE, sMAPE (\%) $\pm$ standard deviation over 5 runs) of {\bf predicting blood pressure using Vital Sign dataset}. Similar performance trends are observed for the other four datasets (see Tables \ref{performance_1} - \ref{performance_4} in the Appendix). The X in GenF-X is the synthetic window length $L$.} 
\label{performance_short}
\end{table}

\subsection{Performance Comparison}
\label{PC}

{\bf (1) GenF VS SOTA Methods.} In Table \ref{performance_short}, we demonstrate that GenF greatly outperforms all methods studied for predicting blood pressure on the Vital Sign dataset. For example, the MAE of GenF-3 at $t + 8$ is 8.8\% better than the classical seq2seq based method (TLSTM), 10.1\% better than the attention based method (LSTNet), 5\% better than the transformer based method (Informer). Interestingly, as the prediction horizon grows, generating more synthetic data could be helpful, leading to better forecasting performance. Specifically, comparing to the best performing benchmark, GenF-6 achieves an improvement of 7.7\% at $t + 30$ and 10.6\% at $t + 60$. Similar performance trends can be observed using the other four datasets described in Section \ref{DataDes} (see Tables \ref{performance_1} - \ref{performance_4} in the Appendix).

{\bf (2) Complexity Comparison.} We note that GenF is essentially a transformer based method as it uses a shallow transformer based predictor. When predicting the blood pressure, the parameter count for GenF and the other two transformer based SOTA methods are: (i) GenF: 9.0K (CWGAN-TS: 3K; Transformer based Predictor: 6K), (ii) Informer: 10.6K and (iii) LogSparse: 17.9K. GenF uses 15\% and 50\% less parameters than Informer and LogSparse, respectively, but has better performance than both. We note that these SOTA methods aim to better capture long-range dependencies with deep transformers. GenF achieves the same goal by extending the existing time series with synthetic data. As a result, our shallow transformer is able to achieve better performance than the deep transformers.

{\bf (3) Strategy Comparison: GenF VS DF/IF.} In Table \ref{performance_mse}, we vary the length of synthetic window for GenF and compare its performance to DF and IF, where DF can be understood as GenF with synthetic window length of zero and IF can be considered as GenF with synthetic window length of $N$ - 1 (i.e., $N$ is the prediction horizon). We use the same canonical transformer to implement both GenF and DF/IF, so as to evaluate the forecasting strategy itself. The forecasting performance is measured using MSE to capture both variance and bias of the predictor. In Table \ref{performance_mse}, we observe that, for the three tasks conducted (i.e., $t + 8$, $t + 12$, $t + 30$), GenF tends to obtain the lowest MSE. As an example, the MSE of GenF-4 at t + 8 is 134, which is 8.8\% and 12.4\% lower than DF and IF, respectively. This verifies our theoretical results in Corollary \ref{corr:1} using experiments, namely that GenF is able to better balance the forecasting variance and bias, leading to a much smaller MSE.


\subsection{Ablation Study}
\label{SD}
We now conduct an ablation study to demonstrate the effectiveness of key components in GenF. Specifically, we remove one component at a time in GenF and observe the impact on synthetic data generation and forecasting performance. We construct three variants of CWGAN-TS as follows: {\bf (i) CWGAN-GP:} CWGAN-TS without the squared error penalty term in the loss function \eqref{loss_2}. {\bf (ii) CWGAN-RS:} CWGAN-TS without the ITC algorithm, instead, the CWGAN-TS is trained with a randomly selected training subset. {\bf (iii) GAN:} A conventional GAN \cite{goodfellow2014generative} without considering the Wasserstein distance. More importantly, {\bf the advantage of GAN loss} in generating synthetic data is demonstrated by comparing to LSTM.

\begin{table}[!bt]
\small
\centering
\setlength{\tabcolsep}{9pt}
{\renewcommand{\arraystretch}{1.1}
\begin{tabular}{l|cc|ccccc}
\toprule
& \multicolumn{1}{c}{{\fontfamily{lmtt}\selectfont DF}} & \multicolumn{1}{c|}{{\fontfamily{lmtt}\selectfont IF}} & \multicolumn{1}{c}{{\fontfamily{lmtt}\selectfont GenF-2}} & \multicolumn{1}{c}{{\fontfamily{lmtt}\selectfont GenF-4}}  & \multicolumn{1}{c}{{\fontfamily{lmtt}\selectfont GenF-6}} & \multicolumn{1}{c}{{\fontfamily{lmtt}\selectfont GenF-8}} & \multicolumn{1}{c}{{\fontfamily{lmtt}\selectfont GenF-10}} \\ \midrule

\multicolumn{1}{l|}{{\fontfamily{lmtt}\selectfont t + 8}}  & 147 $\pm$ {\tiny 7}  & 153 $\pm$ {\tiny 10}  & 138 $\pm$ {\tiny 9} & 134 $\pm$ {\tiny 7} & 141 $\pm$ {\tiny 9} & --  & --    \\  

\multicolumn{1}{l|}{{\fontfamily{lmtt}\selectfont t + 12}}  & 168 $\pm$ {\tiny 11}  & 177 $\pm$ {\tiny 15} & 159 $\pm$ {\tiny 13} & 156 $\pm$ {\tiny 11} & 152 $\pm$ {\tiny 13} & 157 $\pm$ {\tiny 11} & 166 $\pm$ {\tiny 10}  \\ 
 
\multicolumn{1}{l|}{{\fontfamily{lmtt}\selectfont t + 30}}  & 206 $\pm$ {\tiny 19}  &  221 $\pm$ {\tiny 24} & 189 $\pm$ {\tiny 17} & 185 $\pm$ {\tiny 16} & 179 $\pm$ {\tiny 19} & 188 $\pm$ {\tiny 15} & 186 $\pm$ {\tiny 16} \\ 

\bottomrule
\end{tabular}}
\caption{Performance (MSE $\pm$ standard deviation over 5 runs) comparison between GenF with different lengths of synthetic window and DF/IF for predicting blood pressure on Vital Sign dataset.}
\label{performance_mse}
\vspace{-7mm}
\end{table}

{\bf (1) Experiment Setup. } In the experiment of predicting heart rate on the Vital Sign dataset, we shortlist a unit called Subject ID 23 and show the observation window ($M$ = 20), synthetic window ($L$ = 3) and the prediction horizon ($N$ = 1, ..., 5) in Fig. \ref{performance_all}.
Specifically, in synthetic window, we show the synthetic data generated by CWGAN-TS, its three variants and two classical models (ARIMA and LSTM) up to $t = 23$. In the prediction horizon, we use the same predictor to evaluate the forecasting performance of all models by taking their past observations and the synthetic data ($t = 21$ to $23$) as the input. As a comparison, we also show the true value (solid line).
In Fig. \ref{performance_all}, the values in parentheses are the MSE of the synthetic data generation and the forecasting performance, averaged over the synthetic window and prediction horizon, respectively. 

{\bf (2) Advantage of CWGAN-TS in Synthetic Data Generation. } 
Fig. \ref{performance_all} shows the synthetic data generation performance of various methods (left number in parentheses). Comparing CWGAN-TS to LSTM, we find that both of them can capture the rising trend of heart rate, but the synthetic data generated by CWGAN-TS is more stable and accurate. Specifically, the performance of LSTM and CWGAN-TS at $t = 21$ (the first synthetic data) are comparable. Subsequently, the synthetic data generated by LSTM tends to fluctuate greatly, resulting in a larger MSE. We posit this is caused by error propagation. As for CWGAN-TS, we see that CWGAN-TS is essentially a combination of LSTMs (see Section \ref{ES}), with the key difference being that CWGAN-TS incorporates an additional GAN loss, but results in more stable synthetic data with a 65\% lower MSE than LSTM. This demonstrates {\bf the effectiveness of the GAN loss in mitigating error propagation}. Interestingly, the LSTM has 3.5K parameters while CWGAN-TS has only 3K parameters. 

When comparing CWGAN-TS to CWGAN-GP, we find that the squared error penalty term in \eqref{loss_2} significantly improves generation performance (i.e., 78\%), suggesting the crucial role of the error penalty term. 
Moreover, when comparing CWGAN-TS to CWGAN-RS, the experimental results show that the ITC algorithm improves generation performance by 62\%, indicating its effectiveness in selecting representative units. 
Comparing to the original GAN, CWGAN-TS improves generation performance by 81\%. We posit this is because CWGAN-TS uses Wasserstein distance as part of the loss function, leading to a more stable learning process.

{\bf (3) Forecasting Performance. } Fig. \ref{performance_all} also shows the forecasting performance (right number in parentheses). As expected, the model that generates more accurate synthetic data tends to have better forecasting performance (e.g., CWGAN-TS has 40\% over CWGAN-RS), suggesting the important role of the CWGAN-TS and the ITC algorithm in improving long-range forecasting. Lastly, similar performance trends are observed for other subjects (see Fig. \ref{performance_890} in the Appendix).

{\bf (4) Averaged Performance over All Subjects. } We also study the performance averaged over all subjects. In addition to the classical models, the five selected SOTA baselines and a SOTA GAN based model called TimeGAN \cite{yoon2019time} are also examined. We summarize the average performance in Table \ref{ablation_study_new} in the Appendix, where we observe that CWGAN-TS outperforms benchmarks by up to 10\% in synthetic data generation and long-range forecasting.

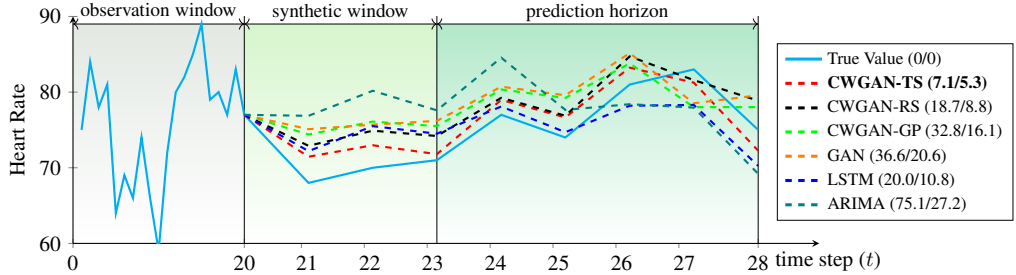
\begin{figure}[!t]
\centering
\resizebox{13.5cm}{!}{
\begin{tikzpicture}
\def\ymaxnum{100}
\def\yminnum{40}
\def\auxlineht{98}
\def\arrowheight{95}
\begin{axis}[
    axis line style={line width=0.5pt},
    width=13.8cm,
    height=5.3cm,
    axis lines=left,
    xmin=0,
    xmax=30.5,
    ymin=60,
    ymax=90,
    xtick={0,7,28},
    xticklabels={0,20,28},
    ytick={60,70,80,90},
    yticklabels = {60,70,80,90},
    legend style={nodes={scale=0.9, transform shape}, legend columns=1, at={(1.26,0.1)},anchor=south east,font=\small, legend style={/tikz/every even column/.append style={column sep=0.001cm, row sep = 0.03cm}}},
    legend image post style={line width=0.5mm},
    legend cell align=left,
    xlabel near ticks,
    x label style={at={(1,0)},},
    xlabel={  \hspace{2mm} time step ($t$)},
    ylabel near ticks,
    ylabel={Heart Rate},
]
    \addplot [forget plot]coordinates{(7,0)(7,\auxlineht)};
    \addplot [forget plot]coordinates{(14.87,0)(14.87,\auxlineht)};
    \addplot [forget plot]coordinates{(28,0)(28,\auxlineht)};
     
    \addplot [forget plot,<->]coordinates{(0,89)(7,89)}
       node[coordinate,name=a,pos=0.5]{};
    \addplot [forget plot,<->]coordinates{(14.87,89)(7,89)}
       node[coordinate,name=b,pos=0.5]{};
    \addplot [forget plot,<->]coordinates{(28,89)(14.87,89)}
       node[coordinate,name=c,pos=0.5]{};

    \addplot [cyan,thick, line width = 1pt] table[x=modify, y=True_Value] {table.tex};
    \addplot [red,thick, line width = 1pt,dashed] table[x=modify,y=cWGAN_GEP] {table.tex};  
    \addplot [black,thick, line width = 1pt,dashed] table[x=modify, y=cWGAN_GEP_w/o_MI] {table.tex}; 
    
    \addplot [green,thick, line width = 1pt,dashed] table[x=modify, y=cWGAN_GP] {table.tex}; 
    \addplot [orange,thick, line width = 1pt,dashed] table[x=modify, y=GAN] {table.tex}; 
    
    \addplot [blue,thick, line width = 1pt,dashed] table[x=modify,y=LSTM] {table.tex};
    \addplot [teal,thick, line width = 1pt,dashed] table[x=modify,y=ARIMA] {table.tex};    

    \legend{{True Value (0/0), {\bf CWGAN-TS (7.1/5.3)}, CWGAN-RS (18.7/8.8), CWGAN-GP (32.8/16.1),  GAN (36.6/20.6), LSTM (20.0/10.8), ARIMA (75.1/27.2)}}
    
    \addplot [forget plot]coordinates{(0,60)}
        node[coordinate,name=A,pos=0.5]{};
    \addplot [forget plot]coordinates{(0,89)}
        node[coordinate,name=B,pos=0.5]{};
    \addplot [forget plot]coordinates{(7,89)}
        node[coordinate,name=C,pos=0.5]{};

    \addplot [forget plot]coordinates{(7,60)}
        node[coordinate,name=D,pos=0.5]{};
    \addplot [forget plot]coordinates{(28,60)}
        node[coordinate,name=E,pos=0.5]{};
    \addplot [forget plot]coordinates{(28,89)}
        node[coordinate,name=F,pos=0.5]{};

    \addplot [forget plot]coordinates{(14.87,89)}
        node[coordinate,name=G,pos=0.5]{};
    \addplot [forget plot]coordinates{(14.87,60)}
        node[coordinate,name=H,pos=0.5]{};
    \addplot [forget plot]coordinates{(28,89)}
        node[coordinate,name=J,pos=0.5]{};
        
\end{axis}
\node at(a)[align=center,above]{\small observation window};
\node at(b)[align=center,above,yshift=-0.8mm]{\small synthetic window};
\node at(c)[align=center,above,yshift=-0.8mm]{\small prediction horizon};

\node at(b)[xshift=-5.5mm,  yshift=-39.1mm]{21};
\draw[line width=0.07mm,  gray](3.74, -0.008) -- (3.74, -0.075);
  
\node at(b)[xshift=4.8mm,  yshift=-39.1mm]{22};
\draw[line width=0.07mm,  gray](4.77, -0.008) -- (4.77, -0.075);

\node at(b)[xshift=15.1mm,  yshift=-39.1mm]{23};
\draw[line width=0.07mm,  gray](5.80, -0.008) -- (5.80, -0.075);

\node at(b)[xshift=25.4mm,  yshift=-39.1mm]{24};
\draw[line width=0.07mm,  gray](6.83, -0.008) -- (6.83, -0.075);

\node at(b)[xshift=35.7mm,  yshift=-39.1mm]{25};
\draw[line width=0.07mm,  gray](7.86, -0.008) -- (7.86, -0.075);

\node at(b)[xshift=46mm,  yshift=-39.1mm]{26};
\draw[line width=0.07mm,  gray](8.89, -0.008) -- (8.89, -0.075);

\node at(b)[xshift=56.3mm,  yshift=-39.1mm]{27};
\draw[line width=0.07mm,  gray](9.92, -0.008) -- (9.92, -0.075);

\begin{scope}[on background layer]
    \fill[top color=gray!90!green!20!white,bottom color=white] (A) rectangle (C);
    \fill[top color=green!80!red!20!white,bottom color=white] (D) rectangle (G);
    \fill[top color=green!70!blue!30!white,bottom color=white] (H) rectangle (J);
\end{scope}
\end{tikzpicture}}
\vspace{-2mm}
\caption{Synthetic data generated by various models and their corresponding forecasting performance using Vital Sign dataset. The values in parentheses are the MSE of synthetic data generation and forecasting performance, averaged over the synthetic window and prediction horizon, respectively. }
\label{performance_all}
\vspace{-5mm}
\medskip
\hrulefill
\vspace{-4mm}
\end{figure}

\section{Reflections}
\label{discussion}

In this paper, we propose a competitive long-range forecasting strategy, called GenF, which is able to better balance the forecasting bias and variance, leading to an improvement of  5\% - 11\% in forecasting performance while having 15\% - 50\% less parameters. 
We now conclude the paper by discussing some relevant points and avenues for future research.


\noindent {\bf (1) Selection of the Synthetic Window Length $L$: } The performance of GenF depends on the choice of $L$. Our results suggest that as the prediction horizon grows, increasing $L$ could be helpful (i.e., $L$ = 3 for $t + 8$, $L$ = 6 for $t + 12$ in Table \ref{performance_short}).  Theoretically determining the optimal $L$ clearly deserves deeper thought. Alternatively, $L$ can be thought of as a hyper-parameter and tuned via trial and error. 

\noindent {\bf (2) Flexibility of GenF: }In fact, GenF can be considered as a general framework and is flexible enough to support any model as the synthetic data generator and predictor. We evaluate the forecasting performance using Informer/LogSparse as the predictor and find the performance is not comparable to that of GenF which uses the canonical transformer (see Table \ref{performance_add} in the Appendix). We posit this is due to the simplification of transformer complexity introduced by Informer/LogSparse.

\noindent {\bf (3) The ITC Algorithm: } In the ITC algorithm, we only use first order mutual information as the scoring function. We note that other types of scoring functions, such as joint mutual information, conditional mutual information or pairwise mutual information, could be a better choice for the scoring function. We will explore them in our future research.

\noindent {\bf (4) PICV and TSCV: }We note that our cross validation (CV) method of doing random train-test split at the unit level is known as Population-Informed CV (PICV). PICV is practical as it requires no prior knowledge about the unseen data. We understand that PICV may cause data leakage as there may be correlations across different units. Thus, we examine Time Series CV (TSCV) which splits the data in chronological order. Our results on the Air Quality dataset demonstrate that for TSCV, GenF remains competitive and outperforms SOTA methods by at least 7.8\% (see Table \ref{TSCV_COM} in the Appendix).

\noindent {\bf (5) Broader Impact:}
There are several unintended consequences to long-range forecasting. These primarily include the economic and societal impacts of long-range forecasts. For example, a far in the future forecast of CO2 levels could impact current carbon policies, e.g., the implementation of a carbon tax. Given the influence of long-range forecasting, researchers must try to expose inherent biases in their algorithms, as well as look out for malicious use of algorithms by bad actors.

\bibliographystyle{plain}
\bibliography{main}

\clearpage

\appendix
\onecolumn
\clearpage

\setcounter{theorem}{0}
\setcounter{prop}{0}
\setcounter{corollary}{0}
\setlength{\belowdisplayshortskip}{\belowdisplayskip}
\section{Proofs of Theoretical Results}
For Proposition \ref{prop:1}, we provide minor corrections, which lead to minor changes in the Theorem \ref{thm:1} and Corollary \ref{corr:1}, and are described below. Note that these corrections do not alter the form and implications of the results and their discussion in the main paper. 

\begin{prop}\label{prop:1}
Let $S$ be the sum of bias and variance terms, we have $S_{dir} = $ $B_{dir}(N)$ + $V_{dir}(N)$ for direct forecasting and $S_{iter}$ = $B_{iter}(N)$ + $V_{iter}(N)$ for iterative forecasting. For GenF, let $Y_{M-L} = \{X_{L+1}, \cdots, X_{M}\}$ and $Y_{L} = \{{X}_{M+1}, \cdots, {X}_{M+L}\}$ be the past observations and let $\widetilde{Y}_{L} = \{\widetilde{X}_{M+1}, \cdots, \widetilde{X}_{M+L}\}$ be the generated synthetic data for the next $L$ time steps. Let $\gamma(\theta, N-L) =f(\{Y_{M-L},\widetilde{Y}_{L}\},\theta,N-L)-f(\{Y_{M-L},{Y}_{L}\},\theta,N-L) $. Note that $\gamma(\theta,0)=\widetilde{Y}_{N}-Y_{N}$. We then have, 

\vspace*{-4mm}\begin{align}
    S_{GenF} = \underbrace{\mathbb{E}_{\theta \sim \Theta,Y}[\gamma(\theta, N-L)^2]}_\text{\small Iterative Forecasting} + \underbrace{B_{dir} (N - L) + V_{dir} (N - L) }_\text{\small Direct Forecasting}, \label{eq2}
\vspace*{1mm}\end{align}
where $\mathbb{E}_{\theta \sim \Theta}[\gamma(\theta, 0)^2] = S_{iter}=B_{iter}(N)$ + $V_{iter}(N).$
\end{prop}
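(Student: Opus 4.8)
The idea is to collapse every ``bias $+$ variance'' quantity to a single mean square error, and then split the GenF forecast error into a genuine direct-forecast part and a synthetic-data part. Write $u:=u_{M+N}=\mathbb{E}[X_{M+N}\mid Y]$. \emph{Step 1 (bias--variance identity).} For any forecaster $g=g(W,\theta)$ of $X_{M+N}$ built from an information set $W$ whose conditional mean of $X_{M+N}$ equals $u$, condition on $W$ and expand the square; since $\mathbb{E}_{\Theta}[g-\mathbb{E}_{\Theta}g]=0$,
\[
\mathbb{E}_{\Theta}\!\left[(g-u)^2\mid W\right]=\operatorname{Var}_{\Theta}(g\mid W)+\left(\mathbb{E}_{\Theta}[g]-u\right)^2 .
\]
Averaging over $W$ and matching with the definitions of $B(\cdot),V(\cdot)$ in the MSE decomposition gives $S[g]:=B[g]+V[g]=\mathbb{E}\!\left[(g-u)^2\right]$. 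Applying this to $f(Y,\theta_D,N)$, to the iterative forecaster $f_{iter}$, and to $f_{GenF}:=f(\{Y_{M-L},\widetilde{Y}_{L}\},\theta,N-L)$ yields $S_{dir}=\mathbb{E}[(f(Y,\theta_D,N)-u)^2]$, $S_{iter}=\mathbb{E}[(f_{iter}-u)^2]$ and $S_{GenF}=\mathbb{E}[(f_{GenF}-u)^2]$.

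\emph{Step 2 (insert the oracle direct forecast).} Let $g':=f(\{Y_{M-L},{Y}_{L}\},\theta,N-L)$ be the same predictor fed the genuine continuation ${Y}_{L}=\{X_{M+1},\dots,X_{M+L}\}$ instead of the synthetic one, so that $f_{GenF}=g'+\gamma(\theta,N-L)$ by the definition of $\gamma$. Expanding the square,
\[
S_{GenF}=\mathbb{E}\!\left[(g'-u)^2\right]+\mathbb{E}\!\left[\gamma(\theta,N-L)^2\right]+2\,\mathbb{E}\!\left[(g'-u)\,\gamma(\theta,N-L)\right].
\]
The middle term is exactly the claimed iterative contribution $\mathbb{E}_{\theta\sim\Theta,Y}[\gamma(\theta,N-L)^2]$. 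For the first term, $g'$ is a standard horizon-$(N-L)$ direct forecaster acting on the length-$M$ window $\{X_{L+1},\dots,X_{M+L}\}$; assuming the forecasting process is stationary and the window is long enough that $\mathbb{E}[X_{M+N}\mid X_{L+1},\dots,X_{M+L}]=u$, Step~1 identifies $\mathbb{E}[(g'-u)^2]$ with $B_{dir}(N-L)+V_{dir}(N-L)$.

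\emph{Step 3 (the cross term, the crux).} It remains to dispose of $2\mathbb{E}[(g'-u)\,\gamma(\theta,N-L)]$. Conditioning on $Y$ and on the predictor parameters $\theta$ and integrating over the generator's randomness gives $2\,\mathbb{E}\big[(g'-u)\,\mathbb{E}[\gamma(\theta,N-L)\mid Y,\theta]\big]$, which vanishes once $\mathbb{E}[\gamma(\theta,N-L)\mid Y,\theta]=0$, i.e. once the CWGAN-TS generation is conditionally unbiased; this is where one uses that the generator and the predictor are fitted on the disjoint subsets $\mathbb{G}$ and $\mathbb{P}$, so their parameter randomness is independent. I expect \textbf{this to be the main obstacle}: in full generality $\mathbb{E}[\gamma\mid Y,\theta]\neq0$ (a generator can at best match the conditional-mean path, not the noisy realization ${Y}_{L}$, and $f$ need not be affine in its input), so a rigorous argument needs either an explicit assumption that the generation error is uncorrelated with the direct-forecasting error, or a bound showing the cross term is of lower order than the two retained terms. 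Granting this, the three pieces assemble into the displayed identity.

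\emph{Step 4 (boundary behaviour).} At $L=N$ the residual direct step is a $0$-step read-out, so the first term reduces to (at most) the irreducible noise $Z(N)$, and $\gamma(\theta,0)=\widetilde{Y}_{N}-{Y}_{N}$ becomes precisely the iterative forecast error; Step~1 then yields $\mathbb{E}[\gamma(\theta,0)^2]=S_{iter}=B_{iter}(N)+V_{iter}(N)$ (reading ${Y}_{N}$ as the conditional mean $u$ so that the noise is not counted twice), which is the stated consistency condition. Symmetrically, $L=0$ gives $\gamma\equiv0$ and recovers $S_{dir}=B_{dir}(N)+V_{dir}(N)$, so the decomposition interpolates between iterative and direct forecasting as $L$ ranges over $0<L<N$.
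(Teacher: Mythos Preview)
Your approach is essentially the same as the paper's: add and subtract the oracle direct forecast $g'=f(\{Y_{M-L},Y_L\},\theta,N-L)$, expand the square, and identify the two retained pieces with the iterative and direct contributions. The cross term you correctly flag as the crux is handled in the paper by exactly the kind of assumption you anticipated---the authors simply assert that the generation error $\gamma$ and the direct-forecasting error $g'-X_{M+N}$ are independent (``$\gamma$ depends primarily on the error of the iterative forecaster, which is independent of the direct forecasting error'')---so your treatment is, if anything, more careful than the original on this point.
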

\begin{proof}
We consider the decomposition mean-squared error at the horizon $N$, neglecting the noise term $Z(N)$, as follows
\vspace{1mm}\begin{align}
&\mathbb{E}_{Y,\theta}[(\widetilde{X}_{M+N}-X_{M+N})^2] = \mathbb{E}_{Y,\theta}[(f(\{Y_{M-L},\widetilde{Y}_{L}\},\theta,N-L)-X_{M+N})^2]  \\ \nonumber
&=\mathbb{E}_Y[(f(\{Y_{M-L},\widetilde{Y}_{L}\},\theta,N-L)-(f(\{Y_{M-L},{Y}_{L}\},\theta,N-L) \\ 
&\quad \quad\quad\quad + (f(\{Y_{M-L},{Y}_{L}\},\theta,N-L)-X_{M+N})^2] \\ \nonumber
&=\mathbb{E}_{Y,\theta}[(f(\{Y_{M-L},\widetilde{Y}_{L}\},\theta,N-L)-(f(\{Y_{M-L},{Y}_{L}\},\theta,N-L)] \\
& \quad \quad\quad\quad + \mathbb{E}_{Y,\theta}[ (f(\{Y_{M-L},{Y}_{L}\},\theta,N-L)-X_{M+N})^2] \\
&=\mathbb{E}_{\theta \sim \Theta,Y}[\gamma(\theta, N-L)^2]+ B_{dir} (N-L) + V_{dir} (N-L). 
\end{align}\vspace*{1mm}
Here, the third step follows from the fact that the error terms $f(\{Y_{M-L},\widetilde{Y}_{L}\},\theta,N-L)-f(\{Y_{M-L},{Y}_{L}\},\theta,N-L)$ and $f(\{Y_{M-L},{Y}_{L}\},\theta,N-L)-X_{M+N}$ are independent, as $f(\{Y_{M-L},\widetilde{Y}_{L}\},\theta,N-L)-f(\{Y_{M-L},{Y}_{L}\},\theta,N-L)$ depends primarily on the error of the iterative forecaster, which is independent of the direct forecasting error.
\end{proof}

\begin{theorem}\label{thm:1}
We consider the direct forecasting with parameters $\theta_D$, iterative forecasting with parameters $\theta_I$, and the proposed GenF in Proposition \eqref{prop:1}. Assume that the ground truth realization of the forecasting process can be modelled by some $\theta_D^*$ and $\theta_I^*$, and after training, the estimated parameters follow $\theta_D \sim \mathcal{N}(\theta_D^*,\sigma_D^2)$ and $\theta_I \sim \mathcal{N}(\theta_I^*,\sigma_I^2)$. Assume that the iterative forecasting function is 2nd-order $L_1,L_2$-Lipschitz continuous, and the direct forecasting function is first order Lipschitz continuous. Let us denote quadratic recurrence relations of the form $b_{\alpha}(k+1)= b_{\alpha}(k)\left(L_1+1+b_{\alpha}(k)L_2 \right),$ where $b_{\alpha}(1)=\alpha \sigma_I^2$, for any $\alpha\geq0$. Assume that iterative forecasting has zero variance and direct forecasting has zero bias. Then, for some constants $\beta_0,\beta_1,\beta_2\geq0$, which represent the Lipschitz constants of the direct forecasting function, we have $S_{dir} \leq  U_{dir}$, $S_{iter} \leq U_{iter}$ and $S_{GenF} \leq U_{GenF}$, where $U_{dir} =  (N-1)\beta_1 + \sigma_{D}^2\beta_2$, $U_{iter} = b_{\alpha}(N)^2$, and $U_{GenF} = b_{\alpha}(L)^2(\beta_0) + (N-L-1)\beta_1 + \sigma_{D}^2\beta_2$. The quantities $\alpha$ and $\beta_0,\beta_1,\beta_2$ depend on the iterative and direct forecasting functions respectively.
\end{theorem}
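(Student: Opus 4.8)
The plan is to establish the three bounds separately; in each case I would start from the bias--variance decomposition of Proposition~\ref{prop:1}, then invoke the two simplifying assumptions (zero variance for the iterative forecaster, zero bias for the direct forecaster) and the Lipschitz/Gaussian hypotheses. So the work splits into: (i) a recurrence argument controlling the iterative error, (ii) a one-shot Lipschitz argument controlling the direct-forecasting variance, and (iii) combining the two to handle $S_{GenF}$ via the decomposition $S_{GenF}=\mathbb{E}_{\theta,Y}[\gamma(\theta,N-L)^2]+B_{dir}(N-L)+V_{dir}(N-L)$.

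\textbf{Iterative term.} Since iterative forecasting has zero variance, $S_{iter}=B_{iter}(N)$, so it suffices to control the magnitude $\delta_k$ of the $k$-step iterative error $\widetilde X_{M+k}-u_{M+k}$. First I would treat the base case: the one-step error comes only from the mismatch $\theta_I$ vs.\ $\theta_I^*$, and first-order Lipschitz continuity of the one-step map in its parameter argument together with $\mathbb{E}\|\theta_I-\theta_I^*\|^2=\sigma_I^2$ gives $\delta_1\le \alpha\sigma_I^2$ for a constant $\alpha$ fixed by the iterative forecaster, matching $b_\alpha(1)$. For the inductive step I would feed the already-corrupted input $\widetilde X_{M+k}$ (error $\delta_k$) into the one-step map and use the $2$nd-order $L_1,L_2$-Lipschitz property to bound the propagated contribution by $L_1\delta_k+L_2\delta_k^2$, then add the fresh parameter-mismatch contribution, which is of order $\delta_k$; this yields $\delta_{k+1}\le \delta_k(L_1+1+L_2\delta_k)$, i.e.\ exactly the defining recurrence of $b_\alpha$, and since $b_\alpha$ is increasing, $\delta_k\le b_\alpha(k)$ for all $k$. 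Squaring at $k=N$ gives $B_{iter}(N)\le b_\alpha(N)^2=U_{iter}$.

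\textbf{Direct and GenF terms.} Since direct forecasting has zero bias, $S_{dir}=V_{dir}(N)$. First-order Lipschitz continuity of $f(\cdot,\theta,N)$ in the parameter argument converts the parameter spread $\sigma_D^2$ into a variance contribution $\sigma_D^2\beta_2$, while the dependence on the horizon contributes a term growing at most linearly in $N$, yielding $V_{dir}(N)\le (N-1)\beta_1+\sigma_D^2\beta_2=U_{dir}$; the identical argument with horizon $N-L$ gives $V_{dir}(N-L)\le (N-L-1)\beta_1+\sigma_D^2\beta_2$ and $B_{dir}(N-L)=0$. For $S_{GenF}$ it then remains to bound $\mathbb{E}_{\theta,Y}[\gamma(\theta,N-L)^2]$, where $\gamma(\theta,N-L)$ is the difference of the direct forecaster $f(\cdot,\theta,N-L)$ evaluated on the window with synthetic suffix $\widetilde Y_L$ versus the window with the true suffix $Y_L$. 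The suffix $\widetilde Y_L$ is produced by $L$ iterative steps, so the previous part gives $\|\widetilde Y_L-Y_L\|\le b_\alpha(L)$, and first-order Lipschitz continuity of the direct forecaster in its input yields $|\gamma(\theta,N-L)|\le \sqrt{\beta_0}\,b_\alpha(L)$, hence $\mathbb{E}[\gamma(\theta,N-L)^2]\le \beta_0\,b_\alpha(L)^2$. Adding the three contributions gives $S_{GenF}\le \beta_0\,b_\alpha(L)^2+(N-L-1)\beta_1+\sigma_D^2\beta_2=U_{GenF}$.

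\textbf{Main obstacle.} The delicate step is the iterative recurrence: at each step one must cleanly separate the error propagated from the previous (already-corrupted) forecast from the newly injected parameter error, and show their combination is dominated by $\delta_k(L_1+1+L_2\delta_k)$ and not by something larger — in particular justifying the ``$+1$'' term and pinning down $\alpha$ from the one-step Lipschitz constant and $\sigma_I$. A secondary subtlety is extracting the linear-in-horizon growth $(N-1)\beta_1$ of the direct-forecasting variance, which rests on an assumption about how $f(\cdot,\theta,N)$ varies with the horizon $N$, beyond mere Lipschitz continuity in $Y$ and $\theta$.
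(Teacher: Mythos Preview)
Your proposal is correct and follows essentially the same route as the paper: derive the quadratic recurrence for the iterative bias, bound $V_{dir}$ by a two-step Lipschitz decomposition (first in the horizon to reduce to $V_{dir}(1)$, giving the $(N-1)\beta_1$ term, then in $\theta_D$ to extract $\sigma_D^2\beta_2$), and control $\mathbb{E}[\gamma^2]$ via input-Lipschitz continuity of the direct forecaster applied to the suffix error $b_\alpha(L)$. One refinement on the base case you already flagged as delicate: the paper gets $b_\alpha(1)=\alpha\sigma_I^2$ with $\alpha=L_2L_1^2$ not from first-order Lipschitz in $\theta_I$ alone (which would yield $O(\sigma_I)$, not $O(\sigma_I^2)$), but by composing the first-order-in-$\theta$ perturbation $L_1(\theta_I-\theta_I^*)$ with the \emph{second-order} expansion in the input, whose linear part vanishes in expectation since $\mathbb{E}[\theta_I-\theta_I^*]=0$ and whose quadratic part contributes $L_2L_1^2\sigma_I^2$; this same mechanism supplies the fresh $\alpha\sigma_I^2\le b_\alpha(k)$ injection at each subsequent step, which is exactly the ``$+1$'' you were looking to justify.
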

\begin{proof}
Let $X_{k:m}=\{X_k,X_{k+1},...,X_m \}$. Let $f_I$ and $f_D$ represent the iterative and direct forecasting functions respectively, with $\theta_I$ and $\theta_D$ as their respective parameters. As stated in the theorem, we assume that the ground truth realization can be expressed via some configuration of both these forecasters. That is, if 
\vspace{1mm}\begin{equation}
    X_{k+1} = f_{GT}(X_{k+1-m:k+1}) + \epsilon
\end{equation} \vspace{1mm}
represent the ground truth realization that generates the data, and for the direct forecasting function with a horizon $N$, let 
\begin{equation}
    \mathbb{E}_{\epsilon} [X_{k+N}] = f_{GT}(X_{k+1-m:k+1},N)
\end{equation}
represent the mean-squared error minimizing function.Then, as per the assumptions stated in the theorem, there exist $\theta_I^*$ and $\theta_D^*$ such that $f_I(X_{k+1-m:k+1},\theta_I) \approx f_{GT}(X_{k+1-m:k+1})$ and $f_D(X_{k+1-m:k+1},N,\theta_D) \approx f_{GT}(X_{k+1-m:k+1},N)$. Thus, we assume that the direct and iterative forecasters are complex enough to have a configuration close to the ground truth realization. Also note that, as we are only estimating $S_{GenF}$, we do not consider the noise term in our estimation. 

With this, we first estimate the bias of the iterative forecasting part of GenF. 
First, as $\theta_I^*$ represents the underlying realization of the process, we have that 
\vspace{1mm}\begin{equation}
    X_{m+1} = f_I(X_{1:m},\theta_I^*) + \epsilon_1, 
\end{equation}

and so on for all subsequent observations, for some $\epsilon_1\sim \mathcal{N}(0,\sigma_0^2)$. As we assume iterative forecasters with low variance, we intend to compute $\left(\mu_{m+k} - \widehat{\mu}_{m+k}\right)^2$, as the variance term is assumed to be insignificant compared to bias. For the trained $f_I$ with parameters $\theta_I \sim \mathcal{N}(\theta_I^*,\sigma_I^2)$. For the predicted sequence we can write, 
\begin{align}
    \widehat{X}_{m+1} =  f_I(X_{1:m},\theta_I) + \epsilon_1'\\
    \widehat{X}_{m+2} = f_I(\{ X_{2:m}, \widehat{X}_{m+1}\},\theta_I) +  \epsilon_2',
\end{align}

and similarly for the subsequent time-steps. Note that as the only change i the trained realization is in $\theta_I$, the error term $\epsilon_2$ will not change in its distribution, i.e., $\epsilon_2 \sim \mathcal{N}(0,\sigma_0^2)$. We consider a first-order Lipschitz continuous expansion of $f_I$ w.r.t $\theta_I$ and a second-order expansion w.r.t the data points $X$, as we will see that $\mu_{m+k} - \widehat{\mu}_{m+k}$ is only affected by the second-order Lipschitz term. For simplicity of notation, we only include the first and second order Taylor expansions of the arguments of $f_I$ in our equations, and write up to the second-order Taylor expansions of all terms. Note that this does not change the 2nd-order Lipschitz continuity constraint. We can then write
\vspace{1mm}\begin{align}
 \widehat{X}_{m+2} &=  f_I(\{ X_{2:m}, \widehat{X}_{m+1}\},\theta_I) +  \epsilon_2'  \\
 &= f_I(\{ X_{2:m}, f_I(X_{1:m},\theta_I) + \epsilon_1' + a_1\mathbb{E}[(\theta_I-\theta^*_I)]\},\theta_I) +  \epsilon_2' \\
 &=f_I(\{ X_{2:m}, f_I(X_{1:m},\theta_I) + \epsilon_1'\},\theta_I) + L_1^2\mathbb{E}[(\theta_I-\theta^*_I)] + L_2 L_1^2\mathbb{E}[(\theta_I-\theta^*_I)^2]+  \epsilon_2'
\end{align}

Here, the notation $\mathbb{E}[(\theta_I-\theta^*_I)]$ and $\mathbb{E}[(\theta_I-\theta^*_I)^2]$ represent the sum and the sum of squares of the differences between the corresponding parameters. As $\theta_I \sim \mathcal{N}(\theta_I^*,\sigma_I^2)$, we have that 
\vspace{1mm}\begin{align}
&\resizebox{0.925\textwidth}{!}{$\widehat{\mu}_{m+2} =  \mathbb{E}_{\theta_I,X} \left [ f_I(\{ X_{2:m}, f_I(X_{1:m},\theta_I) + \epsilon_1'\},\theta_I) + L_1^2\mathbb{E}[(\theta_I-\theta^*_I)] + L_2 L_1^2\mathbb{E}[(\theta_I-\theta^*_I)^2]+  \epsilon_2' \right]$}\\ \nonumber
& \hspace{8.5mm} \resizebox{0.78\textwidth}{!}{$ \leq \mathbb{E}_{\theta_I,X} \left [ f_I(\{ X_{2:m}, f_I(X_{1:m},\theta_I) + \epsilon_1'\},\theta_I) + \epsilon_2' \right] +  L_1^2\mathbb{E}_{\theta_I,X}\left[\mathbb{E}[(\theta_I-\theta^*_I)]\right]$} \\
& \hspace{12mm} \resizebox{0.28\textwidth}{!}{$ + L_2 L_1^2\mathbb{E}_{\theta_I,X}\left[\mathbb{E}[(\theta_I-\theta^*_I)^2]\right]$} \\
& \hspace{8mm} = \mu_{m+2} + 0 + L_2 L_1^2\sigma_I^2. 
\end{align}

Due to the assumption of low-variance iterative forecasters, all subsequent 2nd-order Taylor terms in the expansion of the arguments $\widehat{X}_{m+i}$ will be significantly greater than the first order terms. Thus from $m+2$ and onwards, we can substitute the $\widehat{X}_{m+i}$ argument of $f_I$ with $\widehat{\mu}_{m+i}$, when estimating $\widehat{\mu}_{m+3},\widehat{\mu}_{m+4}..$ and so on. Let $\alpha=L_2 L_1^2$. With these considerations, for $\widehat{\mu}_{m+3}$, we can write 
\vspace{1mm}\begin{align}
& \resizebox{0.93\textwidth}{!}{$ \widehat{\mu}_{m+3} = \mathbb{E}_{\theta_I,X} \left [ f_I(\{ X_{3:m}, f_I(X_{1:m},\theta_I) + \epsilon_1' + L_1\mathbb{E}[(\theta_I-\theta^*_I)],\mu_{m+2} + \alpha\mathbb{E}[(\theta_I-\theta^*_I)^2]\},\theta_I) +  \epsilon_2' \right ]$ }\\ \nonumber
& \hspace{8.5mm} \resizebox{0.85\textwidth}{!}{$\leq \mathbb{E}_{\theta_I,X} \left [f_I(\{ X_{3:m}, f_I(X_{1:m},\theta_I) + \epsilon_1' ,\mu_{m+2} + \alpha\mathbb{E}[(\theta_I-\theta^*_I)^2]\},\theta_I) + L_1^2\mathbb{E}[(\theta_I-\theta^*_I)] \right. $} \\ 
& \hspace{11.5mm} \resizebox{0.65\textwidth}{!}{$\left. + L_2 L_1^2\mathbb{E}[(\theta_I-\theta^*_I)^2]+L_1\alpha\mathbb{E}[(\theta_I-\theta^*_I)^2] + L_2 \left(\alpha\mathbb{E}[(\theta_I-\theta^*_I)^2]\right)^2 + \epsilon_2'  \right ]$}\\ \nonumber
& \hspace{8.5mm} \resizebox{0.75\textwidth}{!}{$\leq \mathbb{E}_{\theta_I,X} \left [f_I(\{ X_{3:m}, f_I(X_{1:m},\theta_I) + \epsilon_1' ,\mu_{m+2} + \alpha\mathbb{E}[(\theta_I-\theta^*_I)^2]\},\theta_I) + 0 \right. + $}\\
& \hspace{11.5mm} \resizebox{0.35\textwidth}{!}{$ \left. + \alpha\sigma_I^2+L_1\alpha\sigma_I^2 + L_2(\alpha\sigma_I^2)^2  + \epsilon_2'  \right ]$}\\
&\hspace{7.5mm} \leq \mu_{m+3}+ \alpha\sigma_I^2(1+L_1)+ L_2(\alpha\sigma_I^2)^2
\end{align}

Using this expansion, we can converge to a generalization of the iterative sequence that generates $\mu_{m+i}$, as follows. Let $b_{\alpha}(i)=\widehat{\mu}_{m+i}-\mu_{m+i}$. Based on our previous expansions, note that we can write: 
\begin{align}
    b_{\alpha}(k+1) =  (1+L_1)b_{\alpha}(k) + L_2\left(b_{\alpha}(k)\right)^2\label{quad_map} 
\end{align}

Here, \eqref{quad_map} represents a quadratic recurrence function and using the same we note that the bias term for $f_I$, $(\widehat{\mu}_{m+L}-\mu_{m+L})^2$, can then be written as $(\widehat{\mu}_{m+L}-\mu_{m+L})^2\leq b_{\alpha}(L)^2$. This accounts for the error for the iterative part of GenF. Next, we first estimate the variance of the direct forecasting function $f_D$. We have that 
\begin{equation}
\resizebox{0.90\textwidth}{!}{$
    V_{dir}(N-L) = \mathbb{E}_{X,\theta_D}\left[\left(f_D(X_{L:m+L},\theta_D,N-L)  - \mathbb{E}_{\theta_D\sim\mathcal{N}(\theta^*_D,\sigma_D^2)}[f_D(X_{L:m+L},\theta_D,N-L)]\right)^2 \right ]$}.
\end{equation}

As we assume low-bias $f_D$, the mean estimate $\mathbb{E}_{\theta_D\sim\mathcal{N}(\theta^*_D,\sigma_D^2)}[f_D(X_{L:m+L},\theta_D,N-L)]=f_D(X_{L:m+L},\theta^*_D,N-L)$, yielding
\begin{equation}
    \mathbb{E}_{X,\theta_D\sim\mathcal{N}(\theta^*_D,\sigma_D^2)}\left[\left(f_D(X_{L:m+L},\theta_D,N-L)  - f_D(X_{L:m+L},\theta^*_D,N-L)\right)^2 \right ].
\end{equation}

Using the Lipschitz continuity of $f_D$ w.r.t $\theta_D$ (assuming a Lipschitz constant of $\beta_2$), and the Lipschitz continuity of $V_{dir}(N-L)$ w.r.t $N-L$ itself (Lipschitz constant of $\beta_1$), we can write
\vspace{1mm}\begin{align}
    V_{dir}(N-L) &\leq V_{dir}(1) + (N-L-1)\beta_1 \\ 
    & \hspace{1mm} \resizebox{0.78\textwidth}{!}{$ = \mathbb{E}_{X,\theta_D\sim\mathcal{N}(\theta^*_D,\sigma_D^2)}\left[\left(f_D(X_{L:m+L},\theta_D,1)  - f_D(X_{L:m+L},\theta^*_D,1)\right)^2 \right ]+ (N-L-1)\beta_1$} \\
    & \hspace{1mm} \resizebox{0.78\textwidth}{!}{$\leq \beta_2\mathbb{E}_{\theta_D\sim\mathcal{N}(\theta^*_D,\sigma_D^2)}\left[(\theta_D -\theta^*_D)^2 \right ] + (N-L-1)\beta_1 = \beta_2\sigma_D^2 + (N-L-1)\beta_1.$}
\end{align}

Note that we use this two-step Lipschitz decomposition, instead of a single-step decomposition with $\theta_D$, as without any constraints on the horizon $N-L$, the Lipschitz constant of $f_D(X_{L:m+L},\theta_D,N-L)$ w.r.t $\theta_D$, as $N-L\xrightarrow[]{} \infty$, will be very large and thus, vacuous. Through this two-step decomposition, note that the Lipschitz constant $\beta_2$ will not be as large, as it applies to $f_D(X_{L:m+L},\theta_D,1)$ which has a horizon of $1$. Next, we bound the last term $\mathbb{E}_{\theta \sim \Theta}[\gamma(\theta, N-L)^2]$ in Proposition 1, again via the Lipschitz continuity of $f(\{X_{L:m},\widetilde{X}_{m:m+L}\},\theta,N-L)$ w.r.t $\{X_{L:m},\widetilde{X}_{m:m+L}\}$ (assuming a Lipschitz constant of $\beta'_0$), as follows.
\vspace{1mm}\begin{align}
    \mathbb{E}_{\theta \sim \Theta}[\gamma(\theta, N-L)^2] &=\mathbb{E}_{\theta \sim\mathcal{N}(\theta^*_D,\sigma_D^2) }\left[\left( f(\{X_{L:m},\widetilde{X}_{m:m+L}\},\theta,N-L) \right. \right. \nonumber \\ 
    & \hspace{1mm} \left. \left. -f(\{X_{L:m},X_{m:m+L}\},\theta,N-L)\right)^2 \right] \\ 
    &\leq \mathbb{E}_{\theta \sim\mathcal{N}(\theta^*_D,\sigma_D^2) }\left[\left(L\beta'_0(\widetilde{X}_{m+L} - X_{m+L})\right)^2\right] \\ 
    &\leq  (L\beta'_0)^2\mathbb{E}_{\theta \sim\mathcal{N}(\theta^*_D,\sigma_D^2) }\left[\left(\widetilde{X}_{m+L} - X_{m+L})\right)^2\right] = (L\beta'_0)^2(b_{\alpha}(L))^2.
\end{align}

Denoting $\beta_0=(L\beta'_0)^2$, and combining all error upper bounds from the iterative and direct forecasting functions, we finally have
\vspace{1mm}\begin{equation}
    S_{GenF} \leq \beta_0(b_{\alpha}(L))^2 + \beta_2\sigma_D^2 + (N-L-1)\beta_1 = b_{\alpha}(L)^2(\beta_0) + (N-L-1)\beta_1 + \sigma_{D}^2\beta_2.
\end{equation}

Denoting this upper bound via $U_{GenF}$, and similarly for $U_{iter}$ and $U_{dir}$, yields the final result. 
\end{proof}

\begin{corollary}\label{corr:1}
$U_{dir}$, $U_{iter}$ and $U_{GenF}$ are as defined in Theorem \ref{thm:1}.
When  $\beta_0<\min\{\beta_1/b_{\alpha}(1)^2,(b_{\alpha}(N)^2-\sigma_D^2\beta_2)/b_{\alpha}(N-1)^2\}$, we have that $U_{GenF} < U_{iter}$ and $U_{GenF} < U_{dir}$, for some $0<L<N$. Furthermore, when $(N-1)\beta_1 + \sigma_{D}^2\beta_2 \approx  b_{\alpha}(N)^2$, we have $U_{GenF} < U_{iter}$ and $U_{GenF} < U_{dir}$, for \textit{any} choice of $0<L<N$.
\end{corollary}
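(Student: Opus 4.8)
\noindent\textbf{Proof plan for Corollary~\ref{corr:1}.} The plan is to cancel the terms common to $U_{dir}$, $U_{iter}$ and $U_{GenF}$ so that each target inequality becomes an elementary inequality in the single variable $L$, and then to exhibit an explicit good choice of $L$. Writing $U_{GenF}$ as a function of $L$, i.e.\ $U_{GenF}(L) = \beta_0\, b_{\alpha}(L)^2 + (N-L-1)\beta_1 + \sigma_D^2\beta_2$, and recalling $U_{dir}=(N-1)\beta_1+\sigma_D^2\beta_2$ and $U_{iter}=b_{\alpha}(N)^2$ from Theorem~\ref{thm:1}, one checks directly that $U_{GenF}(L)<U_{dir}$ reduces, after cancelling $(N-L-1)\beta_1+\sigma_D^2\beta_2$ from both sides, to $\beta_0\, b_{\alpha}(L)^2 < L\beta_1$, while $U_{GenF}(L)<U_{iter}$ is equivalent to $\beta_0\, b_{\alpha}(L)^2 + (N-L-1)\beta_1 + \sigma_D^2\beta_2 < b_{\alpha}(N)^2$.

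For the first assertion I would read off the two branches of the hypothesized bound on $\beta_0$ as exactly these two equivalences evaluated at extreme values of $L$: at $L=1$ the first equivalence becomes precisely $\beta_0 < \beta_1/b_{\alpha}(1)^2$, and at $L=N-1$ (where $N-L-1=0$) the second becomes precisely $\beta_0 < (b_{\alpha}(N)^2-\sigma_D^2\beta_2)/b_{\alpha}(N-1)^2$. A two-way case split then closes the argument: if $U_{dir}\le U_{iter}$, take $L=1$, so that $U_{GenF}(1)<U_{dir}\le U_{iter}$ and both strict inequalities hold; if instead $U_{iter}<U_{dir}$, take $L=N-1$, so that $U_{GenF}(N-1)<U_{iter}<U_{dir}$ and again both hold. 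In either case a single $0<L<N$ works (note this forces $N\ge2$, which is exactly the range in which a valid $L$ exists).

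For the second assertion, the hypothesis $(N-1)\beta_1+\sigma_D^2\beta_2\approx b_{\alpha}(N)^2$ is precisely the statement $U_{dir}\approx U_{iter}$, so it suffices to prove $U_{GenF}(L)<U_{dir}$ for \emph{every} $0<L<N$, i.e.\ $\beta_0 < L\beta_1/b_{\alpha}(L)^2$ for all such $L$. I would bound the right-hand side below by its value at $L=N-1$: since $b_{\alpha}$ satisfies $b_{\alpha}(k+1)=b_{\alpha}(k)(L_1+1+b_{\alpha}(k)L_2)$ with per-step growth factor at least $L_1+1$, the sequence $L\mapsto L/b_{\alpha}(L)^2$ is non-increasing for $L\ge1$, so its minimum over $\{1,\dots,N-1\}$ is attained at $L=N-1$, giving the requirement $\beta_0 < (N-1)\beta_1/b_{\alpha}(N-1)^2$. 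Invoking $(N-1)\beta_1\approx b_{\alpha}(N)^2-\sigma_D^2\beta_2$ identifies this threshold with the second branch of the hypothesis, which is assumed to hold; hence $U_{GenF}(L)<U_{dir}\approx U_{iter}$ for all $0<L<N$.

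The main obstacle is the monotonicity of $L\mapsto L/b_{\alpha}(L)^2$ used in the second assertion: when the Lipschitz constants are tiny the per-step factor $L_1+1+b_{\alpha}(L)L_2$ can be arbitrarily close to $1$ and the sequence need not be monotone near $L=1$. I would resolve this by restricting to the regime of interest in which the iterative forecaster amplifies error, so that the growth factor is at least $2$ (e.g.\ $L_1\ge1$); then $b_{\alpha}(k+1)^2\ge 4\,b_{\alpha}(k)^2\ge 2\,\frac{k+1}{k}\,b_{\alpha}(k)^2$ for $k\ge1$, which yields the required strict decrease. Everything else -- the cancellations and the case split -- is routine algebra.
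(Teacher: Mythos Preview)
Your proposal is correct and shares the paper's core reduction: both you and the paper identify that the two branches of the $\beta_0$ condition are exactly the inequalities $U_{GenF}(1)<U_{dir}$ and $U_{GenF}(N-1)<U_{iter}$, obtained by evaluating $U_{GenF}(L)$ at the extreme admissible values of $L$. Where you diverge is in how you close the argument. For the first assertion, the paper simply records these two inequalities and appeals to the increasing/decreasing structure of the two summands in $U_{GenF}(L)$ without explaining why a \emph{single} $L$ satisfies both inequalities simultaneously; your case split on whether $U_{dir}\le U_{iter}$ (take $L=1$) or $U_{iter}<U_{dir}$ (take $L=N-1$) is a clean and explicit way to finish that the paper omits. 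For the second assertion, the paper's proof is essentially an assertion: it states that under $U_{dir}\approx U_{iter}$ and the $\beta_0$ bound one gets $U_{GenF}(L)$ below both for every $L$, but gives no mechanism. Your monotonicity argument for $L\mapsto L/b_\alpha(L)^2$ supplies that mechanism, and your honest flag that this needs the iterative step to genuinely amplify (e.g.\ $L_1\ge 1$) is appropriate --- the paper silently relies on the same regime without saying so. In short, your route is the paper's route made rigorous, and the extra structure you add (the case split and the monotonicity bound) buys you an actual proof rather than a heuristic.
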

\begin{proof}
First, with regard to the various terms in $U_{GenF}$, we note that $b_{\alpha}(L)^2(\beta_0)$ is an increasing function of $L$, whereas $ (N-L-1)\beta_1 + \sigma_{D}^2\beta_2$ is a decreasing function of $L$. For simplicity of notation, let us denote $U_{GenF}$ via $U_{GenF}(L)$, for a certain choice of $L$. Thus, for $U_{GenF}(L)$ to be less than the value at its extremes ($U_{dir}$ at $L=0$ and $U_{iter}$ at $L=N$) for some $0<L<N$, we must have that $U_{dir}-U_{GenF}(1)>0$ and $U_{iter} - U_{GenF}(N-1)>0$. This yields
\vspace{1mm}\begin{equation}
    \beta_1 - b_{\alpha}(1)^2(\beta_0)>0, \ and
\end{equation}
\vspace{-2mm}\begin{equation}
    b_{\alpha}(N)^2 - b_{\alpha}(N-1)^2(\beta_0)  - \sigma_D^2\beta_2>0,
\end{equation}

respectively, which summarizes to $\beta_0<\min\{\beta_1/b_{\alpha}(1)^2,(b_{\alpha}(N)^2-\sigma_D^2\beta_2)/b_{\alpha}(N-1)^2\}$. Furthermore, we note that when the upper bounds at $L=0$ and $L=N$ are equal, i.e., $(N-1)\beta_1 + \sigma_{D}^2\beta_2 \approx  b_{\alpha}(N)^2$, then assuming $\beta_0<\min\{\beta_1/b_{\alpha}(1)^2,(b_{\alpha}(N)^2-\sigma_D^2\beta_2)/b_{\alpha}(N-1)^2\}$, we must have that $U_{GenF}(L)<(N-1)\beta_1 + \sigma_{D}^2\beta_2$ and $U_{GenF}(L)<b_{\alpha}(N)^2$ for any $0<L<N$. 

\end{proof}

\clearpage
\newpage 

\section{Supplementary Results}


In this Appendix, we show some additional experimental results. Specifically, 

\begin{enumerate}[itemsep = 0mm,leftmargin=5mm, topsep=0pt]
    \item In Section \ref{more_results}, we present the performance comparison using Multi-Site Air Quality dataset (forecasting NO${_2}$ emissions), the World Energy Consumption dataset (forecasting fossil fuel consumption), the House Electricity Consumption dataset (forecasting electricity consumption) and the Green Gas dataset (forecasting the green gas concentration).
    
    \item In Section \ref{Ablation_Study}, we conduct an ablation study on more subjects and provide the experimental results. 
    
    \item In Section \ref{A3}, we show the ablation study results averaged on all subjects studied in the MIMIC-III Vital Signs dataset.
    
    \item In Section \ref{TSCV}, we evaluate the performance of GenF using Time Series Cross Validation (TSCV) and compare it to the performance of Population-Informed Cross Validation (PICV)
    
    \item In Section \ref{LP}, we examine the forecasting performance of GenF using LogSparse and Informer as the predictor.
\end{enumerate}


\subsection{Performance Comparison on Air Quality and Energy Consumption Datasets}
\label{more_results}
In this subsection, we show the performance comparison on the Multi-Site Air Quality dataset (forecasting NO$_{2}$ emissions), the World Energy Consumption dataset (forecasting fossil fuel consumption), the House Electricity Consumption dataset (forecasting electricity consumption) and the Green Gas dataset (forecasting the green gas concentration). We follow the same parameter tuning procedure in Section \ref{ES} and the configurations for GenF and the five SOTA methods are provided in Table \ref{Fig:MI}. The training and test procedures of GenF are summarized in Algorithm \ref{algorithm2}. The performance comparisons are summarized in Tables \ref{performance_1} - \ref{performance_4}. 

We can see that the predictive performance largely mirror those in Table \ref{performance_short}: the proposed GenF significantly outperforms the three strong baselines (i.e., 4.3\% - 11\% in Table \ref{performance_1} and 8.3\% - 14.7\% in Table \ref{performance_2}). Furthermore, as the prediction horizon grows, generating more synthetic data tends to provide better forecasting performance. As an example, in Table \ref{performance_1}, GenF-2 provides the best predictive performance at $t + 8$ while GenF-5 can obtain the best performance at $t + 15$. 
 
\begin{table}[!h]
\vspace{7mm}
\hspace*{2mm}\begin{minipage}{0.5\textwidth}
{\renewcommand{\arraystretch}{1.55}
\setlength{\tabcolsep}{4pt}
\centering
\begin{tabular}{|c|c|c|c|}
\hline
\multicolumn{3}{|c|}{\small \bf Generative Forecasting (GenF)}\\
\hline
\multicolumn{2}{|c|}{\small CWGAN-TS} & {\small Predictor} \\
\hline
{\small Generator} & {\small Discriminator} & { - } \\
\hline
{\small $(S, M+1, K)$ } & {\small $ (S, M+1, K) $ } & {\small $(S, M, K)$} \\
\hline
{\small LSTM (5)} & {\small LSTM (5)} & {\small Atten En (3)}\\ \hline
{\small Linear (12)} & {\small Linear (12)} & {\small Atten En (3) } \\ \hline
{\small Linear (K)} & {\small Linear (4)} & {\small Atten De (3)} \\ \hline
{\small Reshape ($S$, 1, K)} & {\small Linear (1)} & {\small Atten De (3)} \\ \hline
{\small - } & {-} & {\small Linear (1)} \\ \hline
\end{tabular}}
\end{minipage}%
\hspace*{4mm}\begin{minipage}{0.5\textwidth}
{\renewcommand{\arraystretch}{1.02}
\footnotesize
\setlength{\tabcolsep}{4pt}
\centering
\begin{tabular}{|c|}
\hline
{\bf LSTNet} \\ skip-length p = 5, gradient clipping = 10, \\ epoch = 1000, dropout = 0.1, batchsize  = 64. \\ \hline
{\bf TLSTM} \\ learning rate (lr) decay = 0.8, lr = 1$\texttt{e}$-3, \\dropout = 0.1, batchsize = 64, epoch = 1000. \\  \hline
{\bf DeepAR} \\ LSTM (20), embedding size = 10 \\ batchsize = 64, LSTM dropout = 0.1, lr = 1$\texttt{e}$-3. \\ \hline
{\bf LogSparse} \\ embedding size = 16, kernel size = 9 \\ batchsize = 64, epoch = 1000, lr = 3$\texttt{e}$-3. \\ \hline
{\bf Informer} \\ Encoder/Decoder Layers = 2, heads = 4 \\ batchsize = 32, dropout = 0.05, lr = 1$\texttt{e}$-3. \\ \hline
\end{tabular}}
\end{minipage}
\caption{(left) Network configuration of GenF. The fourth row represents the shape of input variables and the parameter in Linear($\cdot$) is the number of output units. The parameter in LSTM($\cdot$) is the hidden state size. The parameter in Atten En and Atten De is the number of heads in multi-head encoder and multi-head decoder. (right) The training details of SOTA methods.}
\label{Fig:MI}
\end{table}

\setlength{\intextsep}{-5pt} 
\begin{algorithm}[!t]
\algsetup{linenosize=\normalsize}
\normalsize
    \caption{\normalsize The Algorithm of Generative Forecasting } 
    \label{algorithm2}
    \begin{algorithmic}[1]
    \REQUIRE {\fontfamily{bch}\selectfont (i)} Time series dataset $\mathbb{D} \in \mathbb{R}^{S \times M \times K}$; {\fontfamily{bch}\selectfont (ii)} $\lambda$, $\beta$=5, 1 (see (1)); {\fontfamily{bch}\selectfont (iii)} batch size = 64; {\fontfamily{bch}\selectfont (iv)} Adam Optimizer (lr = 0.001).
    \STATE Randomly split $\mathbb{D}$ (unit level): training dataset $\mathbb{T} \in\mathbb{R}^{S_1 \times M \times K}$ (60\%), i.e., $\mathbb{T}:  \{ X_{1}^{i}, \cdots, X_{M}^{i} \}$ for  $i = 1, \cdots, S_{1}$ and $X \in \mathbb{R}^{K}$, test dataset $\mathbb{Q} \in \mathbb{R}^{S_2 \times M \times K}$ (20\%), validation dataset $\in \mathbb{R}^{S_3 \times M \times K}$ (20\%).
    \STATE Apply the ITC algorithm to split $\mathbb{T}$ (unit level): training dataset $\mathbb{H} \in\mathbb{R}^{S_4 \times M \times K}$ (50\%), training dataset $\mathbb{P} \in\mathbb{R}^{S_5 \times M \times K}$ (50\%).
    \STATE Training dataset $\mathbb{H}$ $\xrightarrow[\text{}]{\text{train}}$ the Generator (G) and Discriminator (D) in CWGAN-TS via loss function (1).
    \FOR {$j = 1$ to $L$}
    	\STATE (i) Synthetic Data Generation: G($\mathbb{P}$) = $\mathbb{Y}_1 \in \mathbb{R}^{S_5 \times 1 \times K}$, i.e., $\mathbb{Y}_{1}:  \{\xbar{X}_{M+j}^{i} \}$ for  $i = 1, \cdots, S_{5}$; G($\mathbb{Q}$) =  $\mathbb{Y}_2 \in \mathbb{R}^{S_2 \times 1 \times K}$.
	\STATE (ii) Prune the oldest observation: $\mathbb{P} \leftarrow \mathbb{P}\backslash X_j^i$, i.e., $\{ X_{j+1}^{i}, \cdots, X_{M+j-1}^i \}$ for  $i = 1, \cdots, S_{5}$; $\mathbb{Q} \leftarrow \mathbb{Q} \backslash X_j^i. $
	\STATE (iii) Concatenation:  $\mathbb{P} \leftarrow \mathbb{P} \oplus \mathbb{Y}_1,$ i.e., $\{ X_{j+1}^{i},\cdots, X_{M+j-1}^i, \xbar{X}_{M+j}^{i} \}$ for  $i = 1,\cdots, S_{5}$; $\mathbb{Q} \leftarrow \mathbb{Q} \oplus \mathbb{Y}_2.$ 	
    \ENDFOR
    \STATE Training Dataset $\mathbb{P}$ $\xrightarrow[\text{}]{\text{train}}$ transformer based Predictor, Testing Dataset $\mathbb{Q}$ $\xrightarrow[\text{}]{\text{test}}$ transformer based Predictor. \\
    \STATE {\bf Return} MSE, MAE and sMAPE between predicted values and real values.
  \end{algorithmic}
\end{algorithm}

{\renewcommand{\arraystretch}{0.6}
\begin{table}[!h]
\small
\vspace{7mm}
\centering
\setlength{\tabcolsep}{2.9pt}
\begin{tabular}{ll|cc|cc|cc|cc}
\toprule
&{\fontfamily{lmtt}\selectfont Prediction Horizon} & \multicolumn{2}{c|}{{\fontfamily{lmtt}\selectfont t + 8}} & \multicolumn{2}{c|}{{\fontfamily{lmtt}\selectfont t + 12}} & \multicolumn{2}{c|}{{\fontfamily{lmtt}\selectfont t + 15}} & \multicolumn{2}{c}{{\fontfamily{lmtt}\selectfont t + 18}}  \\ \midrule 
&{\fontfamily{lmtt}\selectfont Metrics} & {\fontfamily{lmtt}\selectfont MAE} & {\fontfamily{lmtt}\selectfont sMAPE} &{\fontfamily{lmtt}\selectfont MAE} & {\fontfamily{lmtt}\selectfont sMAPE} &{\fontfamily{lmtt}\selectfont MAE} & {\fontfamily{lmtt}\selectfont sMAPE} &{\fontfamily{lmtt}\selectfont MAE} & {\fontfamily{lmtt}\selectfont sMAPE} \\ 
\cmidrule(lr){1-2} \cmidrule(lr){3-4}\cmidrule(lr){5-6}\cmidrule(lr){7-8}\cmidrule(lr){9-10}

 &{\fontfamily{lmtt}\selectfont ARIMA}      & 5.3$\pm${\tiny 0.7} & 5.9$\pm${\tiny 0.6} & 7.6$\pm${\tiny 0.5} & 8.5$\pm${\tiny 0.7} & 9.2$\pm${\tiny 0.6} & 11.4$\pm${\tiny 1.1} & 10.9$\pm${\tiny 0.8} & 13.5$\pm${\tiny 2.1} \\  
 
 &{\fontfamily{lmtt}\selectfont LSTM} & 4.7$\pm${\tiny 0.5} & 5.4$\pm${\tiny 0.3} & 6.8$\pm${\tiny 0.4} & 7.7$\pm${\tiny 0.8}  & 7.9$\pm${\tiny 0.5} & 10.1$\pm${\tiny 0.9}   & 9.4$\pm${\tiny 0.6} & 12.3$\pm${\tiny 1.9}    \\  \midrule

 \parbox[t]{2mm}{\multirow{7}{*}{\rotatebox[origin=c]{90}{\vspace*{-5mm}{\bf SOTA}}}} & {\fontfamily{lmtt}\selectfont TLSTM}            & 4.2$\pm$ {\tiny 0.2}          & 5.0$\pm$ {\tiny 0.3} & 5.7$\pm$ {\tiny 0.1}          & 7.3$\pm$ {\tiny 0.5} & 6.9$\pm$ {\tiny 0.3}          & 8.9$\pm$ {\tiny 1.1} & 8.2$\pm$ {\tiny 0.3}       & 10.8$\pm$ {\tiny 1.7}  \\ 
&{\fontfamily{lmtt}\selectfont LSTNet}           & 4.0$\pm$ {\tiny 0.1}          & 4.7$\pm$ {\tiny 0.4}  & 5.2$\pm$ {\tiny 0.2}          & 6.2$\pm$ {\tiny 0.8} & 6.8$\pm$ {\tiny 0.3}         & 8.7$\pm$ {\tiny 0.8} & 7.9$\pm$ {\tiny 0.3}       &    10.5$\pm$ {\tiny 1.4} \\ 
&{\fontfamily{lmtt}\selectfont DeepAR}       & 4.3$\pm$ {\tiny 0.2}         &  5.1$\pm$ {\tiny 0.4} & 5.4$\pm$ {\tiny 0.3}          & 6.3$\pm${\tiny 0.5} & 7.1$\pm$ {\tiny 0.4}          & 9.0$\pm$ {\tiny 1.0} & 8.5$\pm$ {\tiny 0.4}       & 10.9$\pm${\tiny 1.8}   
\\ 
&{\fontfamily{lmtt}\selectfont Informer}       & 3.9$\pm$ {\tiny 0.3}         &  4.6$\pm$ {\tiny 0.2} & 4.8$\pm$ {\tiny 0.2}          & 5.9$\pm${\tiny 0.3} & 6.4$\pm$ {\tiny 0.5}          & 8.1$\pm$ {\tiny 0.8} & 7.3$\pm$ {\tiny 0.4}       & 9.4$\pm${\tiny 1.2}   
\\ 
&{\fontfamily{lmtt}\selectfont LogSparse}       & 4.1$\pm$ {\tiny 0.3}         &  4.7$\pm$ {\tiny 0.5} & 4.9$\pm$ {\tiny 0.3}          & 6.1$\pm${\tiny 0.4} & 6.6$\pm$ {\tiny 0.2}          & 8.2$\pm$ {\tiny 0.9} & 7.6$\pm$ {\tiny 0.2}       & 9.8$\pm${\tiny 1.5}   
\\ \midrule

&{\fontfamily{lmtt}\selectfont GenF-2 (Ours)}           & {\bf 3.6$\pm$ {\tiny 0.2}}   & {\bf 4.4$\pm$ {\tiny 0.6}} & 4.6$\pm$ {\tiny 0.2}   & 5.5$\pm$ {\tiny 0.7}  & 6.1$\pm$ {\tiny 0.2}          & 7.8$\pm$ {\tiny 1.2} & 7.1$\pm$ {\tiny 0.5}        &  8.9$\pm$ {\tiny 1.1} \\ 
&{\fontfamily{lmtt}\selectfont GenF-3 (Ours)}          & 3.6$\pm$ {\tiny 0.2}           & 4.5$\pm${\tiny 0.4} & {\bf 4.5$\pm$ {\tiny 0.3}}    & {\bf 5.4$\pm$ {\tiny 0.6}}  & 5.9$\pm$ {\tiny 0.3}  &  7.6$\pm$ {\tiny 1.0} & 6.8$\pm$ {\tiny 0.4} &  8.5$\pm$ {\tiny 0.8} \\ 
&{\fontfamily{lmtt}\selectfont GenF-5 (Ours)}           & 3.7$\pm$ {\tiny 0.2}           & 4.5$\pm$ {\tiny 0.5} & 4.6$\pm$ {\tiny 0.4}          & 5.5$\pm$ {\tiny 0.4}  & {\bf 5.8$\pm$ {\tiny 0.3}}  &  {\bf 7.5$\pm$ {\tiny 0.9}} & {\bf 6.5$\pm$ {\tiny 0.3}} & {\bf 8.4$\pm$ {\tiny 0.7}} \\   
\bottomrule
\end{tabular}
\caption{Performance comparison (MAE, sMAPE$\pm$standard deviation) between three variants of GenF and two classical models, five SOTA methods in fossil fuel consumption using the World Energy Consumption dataset. Due to the time span of the dataset (from 1971 to 2014), we only show predictive performance up to t + 18. The bold indicates the best performance.}
\label{performance_1}
\end{table}}

{\renewcommand{\arraystretch}{0.6}
\begin{table}[!ht]
\small
\vspace{7mm}
\centering
\setlength{\tabcolsep}{3.3pt}
\begin{tabular}{ll|cc|cc|cc|cc}
\toprule
&{\fontfamily{lmtt}\selectfont Prediction Horizon} & \multicolumn{2}{c|}{{\fontfamily{lmtt}\selectfont t + 8}} & \multicolumn{2}{c|}{{\fontfamily{lmtt}\selectfont t + 12}} & \multicolumn{2}{c|}{{\fontfamily{lmtt}\selectfont t + 30}} & \multicolumn{2}{c}{{\fontfamily{lmtt}\selectfont t + 60}}  \\ \midrule 
 &{\fontfamily{lmtt}\selectfont Metrics} & {\fontfamily{lmtt}\selectfont MAE} & {\fontfamily{lmtt}\selectfont sMAPE} &{\fontfamily{lmtt}\selectfont MAE} & {\fontfamily{lmtt}\selectfont sMAPE} &{\fontfamily{lmtt}\selectfont MAE} & {\fontfamily{lmtt}\selectfont sMAPE} &{\fontfamily{lmtt}\selectfont MAE} & {\fontfamily{lmtt}\selectfont sMAPE} \\ 
\cmidrule(lr){1-2} \cmidrule(lr){3-4}\cmidrule(lr){5-6}\cmidrule(lr){7-8}\cmidrule(lr){9-10}

 &{\fontfamily{lmtt}\selectfont ARIMA}      & 21.3$\pm${\tiny 1.7} & 26$\pm${\tiny 6} & 27.8$\pm${\tiny 1.5} & 29$\pm${\tiny 9} & 29.2$\pm${\tiny 2.6} & 34$\pm${\tiny 11} & 31.9$\pm${\tiny 1.8} & 35$\pm${\tiny 9} \\  
 
 &{\fontfamily{lmtt}\selectfont LSTM} & 19.7$\pm${\tiny 1.5} & 22$\pm${\tiny 3} & 24.6$\pm${\tiny 1.4} & 27$\pm${\tiny 8}  & 25.9$\pm${\tiny 1.7} & 31$\pm${\tiny 9}   & 29.4$\pm${\tiny 3.1} & 33$\pm${\tiny 6}    \\  \midrule

\parbox[t]{2mm}{\multirow{7}{*}{\rotatebox[origin=c]{90}{\vspace*{-5mm}{\bf SOTA}}}}& {\fontfamily{lmtt}\selectfont TLSTM}        & 18.8$\pm$ {\tiny 1.0}   & 20$\pm${\tiny 5} & 20.7$\pm${\tiny 1.9}  & 22$\pm${\tiny 8} & 24.0$\pm${\tiny 2.0}    & 28$\pm${\tiny 9} & 27.5$\pm${\tiny 2.3} & 31$\pm${\tiny 9} \\ 
&{\fontfamily{lmtt}\selectfont LSTNet}    & 17.8$\pm$ {\tiny 2.0}    & 18$\pm${\tiny 4} &19.9$\pm${\tiny 1.8}  & 21$\pm${\tiny 9} & 23.6$\pm${\tiny 2.2}       & 25$\pm${\tiny 6} & 27.0$\pm${\tiny 2.5} & 28$\pm${\tiny 10}   \\ 
&{\fontfamily{lmtt}\selectfont DeerAR}     & 19.2$\pm$ {\tiny 1.3}    & 21$\pm${\tiny 5} & 22.4$\pm${\tiny 2.0}  & 25$\pm${\tiny 7} & 24.3$\pm${\tiny 2.1}  & 29$\pm${\tiny 7} & 28.2$\pm${\tiny 2.9} & 30$\pm${\tiny 8}  \\ 
&{\fontfamily{lmtt}\selectfont Informer}     & 18.2$\pm$ {\tiny 1.4}    & 20$\pm${\tiny 3} & 20.4$\pm${\tiny 1.2}  & 22$\pm${\tiny 8} & 23.1$\pm${\tiny 1.9}  & 24$\pm${\tiny 5} & 26.1$\pm${\tiny 2.5} & 29$\pm${\tiny 7}  \\ 
&{\fontfamily{lmtt}\selectfont LogSparse}     & 18.1$\pm$ {\tiny 1.5}    & 20$\pm${\tiny 4} & 21.5$\pm${\tiny 1.5}  & 24$\pm${\tiny 6} & 23.5$\pm${\tiny 2.3}  & 26$\pm${\tiny 5} & 26.7$\pm${\tiny 2.6} & 29$\pm${\tiny 8}  \\ 
\midrule
&{\fontfamily{lmtt}\selectfont GenF-2 (Ours)}    & {\bf 16.3$\pm$ {\tiny 1.8}}  & 16$\pm${\tiny 4}  & {\bf 18.0$\pm${\tiny 1.7}}       & {\bf 19$\pm${\tiny 6}}                  & 20.5$\pm${\tiny 2.2} & 23$\pm${\tiny 9} & 24.2$\pm${\tiny 2.5} & 27$\pm${\tiny 12}  \\ 
&{\fontfamily{lmtt}\selectfont GenF-3 (Ours)}    & 16.5$\pm$ {\tiny 1.8}    & 16$\pm${\tiny 5}  &18.2$\pm${\tiny 2.0}  & 19$\pm${\tiny 4} & 20.3$\pm${\tiny 2.4}          & 23$\pm${\tiny 8} & 24.0$\pm${\tiny 2.4}  & 26$\pm${\tiny 9} \\ 
&{\fontfamily{lmtt}\selectfont GenF-6 (Ours)}    & 16.9$\pm$ {\tiny 1.3}   & 17$\pm${\tiny 3}  &18.5$\pm${\tiny 1.9}    &  19$\pm${\tiny 3} & {\bf 19.7$\pm${\tiny 2.0}} & {\bf 22$\pm${\tiny 5}} & {\bf 22.9$\pm${\tiny 1.8}} & {\bf 25$\pm${\tiny 8}} \\
\bottomrule
\end{tabular}
\caption{Performance comparison (MAE, sMAPE (\%)$\pm$standard deviation) between three variants of GenF and two classical models, five SOTA methods in predicting $\text{NO}_{2}$ emission using the Multi-Site Air Quality dataset. The bold indicates the best performance.}
\label{performance_2}
\end{table}}

{\renewcommand{\arraystretch}{0.6}
\begin{table}[!ht]
\small
\vspace{7mm}
\centering
\setlength{\tabcolsep}{3.1pt}
\begin{tabular}{ll|cc|cc|cc|cc}
\toprule
&{\fontfamily{lmtt}\selectfont Prediction Horizon} & \multicolumn{2}{c|}{{\fontfamily{lmtt}\selectfont t + 8}} & \multicolumn{2}{c|}{{\fontfamily{lmtt}\selectfont t + 12}} & \multicolumn{2}{c|}{{\fontfamily{lmtt}\selectfont t + 30}} & \multicolumn{2}{c}{{\fontfamily{lmtt}\selectfont t + 60}}  \\ \midrule 
 &{\fontfamily{lmtt}\selectfont Metrics} & {\fontfamily{lmtt}\selectfont MAE} & {\fontfamily{lmtt}\selectfont sMAPE} &{\fontfamily{lmtt}\selectfont MAE} & {\fontfamily{lmtt}\selectfont sMAPE} &{\fontfamily{lmtt}\selectfont MAE} & {\fontfamily{lmtt}\selectfont sMAPE} &{\fontfamily{lmtt}\selectfont MAE} & {\fontfamily{lmtt}\selectfont sMAPE} \\ 
\cmidrule(lr){1-2} \cmidrule(lr){3-4}\cmidrule(lr){5-6}\cmidrule(lr){7-8}\cmidrule(lr){9-10}

 &{\fontfamily{lmtt}\selectfont ARIMA}      & 5.0$\pm${\tiny 0.9} & 6.9$\pm${\tiny 1.0} & 5.8$\pm${\tiny 1.3} & 7.7$\pm${\tiny 1.4} & 9.5$\pm${\tiny 1.6} & 11.5$\pm${\tiny 1.7} & 18.9$\pm${\tiny 2.8} & 25.4$\pm${\tiny 2.9} \\  
 
 &{\fontfamily{lmtt}\selectfont LSTM} & 4.6$\pm${\tiny 0.7} & 6.3$\pm${\tiny 0.9} & 5.2$\pm${\tiny 1.1} & 7.4$\pm${\tiny 1.3}  & 9.0$\pm${\tiny 1.4} & 11.2$\pm${\tiny 1.6}   & 17.6$\pm${\tiny 2.1} & 23.6$\pm${\tiny 2.6}    \\  \midrule

\parbox[t]{2mm}{\multirow{7}{*}{\rotatebox[origin=c]{90}{\vspace*{-5mm}{\bf SOTA}}}}& {\fontfamily{lmtt}\selectfont TLSTM}        & 4.4$\pm${\tiny 0.6}   & 6.1$\pm${\tiny 0.7} & 4.9$\pm${\tiny 0.9}  & 7.1$\pm${\tiny 1.0} & 8.8$\pm${\tiny 1.3}    & 11.0$\pm${\tiny 1.5} & 16.9$\pm${\tiny 2.3} & 23.1$\pm${\tiny 2.3} \\
&{\fontfamily{lmtt}\selectfont LSTNet}    & 4.8$\pm${\tiny 0.9}    & 6.4$\pm${\tiny 1.0} &5.1$\pm${\tiny 0.8}  & 7.2$\pm${\tiny 1.0} & 8.7$\pm${\tiny 1.2}   & 10.9$\pm${\tiny 1.4} & 16.4$\pm${\tiny 2.1} & 22.7$\pm${\tiny 2.0}   \\ 
&{\fontfamily{lmtt}\selectfont DeerAR}     & 4.7$\pm${\tiny 0.8}    & 6.2$\pm${\tiny 0.9} & 4.8$\pm${\tiny 0.8}  & 7.0$\pm${\tiny 0.9} & 8.5$\pm${\tiny 1.0}  & 10.6$\pm${\tiny 1.2} & 16.1$\pm${\tiny 1.9} & 22.1$\pm${\tiny 2.0}  \\ 
&{\fontfamily{lmtt}\selectfont Informer}     & 3.8$\pm${\tiny 0.6}    & 5.9$\pm${\tiny 0.8} & 4.9$\pm${\tiny 1.0}  & 7.1$\pm${\tiny 0.9} & 8.6$\pm${\tiny 1.4}  & 10.8$\pm${\tiny 1.6} & 15.3$\pm${\tiny 2.4} & 19.8$\pm${\tiny 2.7}  \\

&{\fontfamily{lmtt}\selectfont LogSparse}     & 4.5$\pm${\tiny 0.5}    & 6.2$\pm${\tiny 0.6} & 5.2$\pm${\tiny 1.1}  & 7.4$\pm${\tiny 1.2} & 8.8$\pm${\tiny 1.0}  & 10.9$\pm${\tiny 1.1} & 15.7$\pm${\tiny 2.0} & 20.9$\pm${\tiny 1.8}  \\ 
\midrule

&{\fontfamily{lmtt}\selectfont GenF-2 (Ours)}    & {\bf 3.6$\pm${\tiny 0.5}}  & {\bf 5.7$\pm${\tiny 0.6}}  & {\bf 4.4$\pm${\tiny 0.8}}       & {\bf 6.6$\pm${\tiny 1.2}}                  & 8.4$\pm${\tiny 1.2} & 11.0$\pm${\tiny 2.3} & 15.2$\pm${\tiny 2.3} & 19.4$\pm${\tiny 3.0}  \\ 
&{\fontfamily{lmtt}\selectfont GenF-6 (Ours)}    & 3.9$\pm${\tiny 0.8}   & 5.9$\pm${\tiny 0.9}  & 4.6$\pm${\tiny 0.9}    &6.9$\pm${\tiny 1.1} & {\bf 8.1$\pm${\tiny 1.4}} & {\bf 10.6$\pm${\tiny 1.5}} & {\bf 14.1$\pm${\tiny 2.0}} & {\bf 18.5$\pm${\tiny 2.5}} \\  
\bottomrule
\end{tabular}
\caption{Performance comparison (MAE, sMAPE (\%)$\pm$standard deviation) between three variants of GenF and two classical models, five SOTA methods in predicting electricity consumption using the House Electricity Consumption dataset. Note that all MAE values are divided by $10^{11}$ as the readings are cumulative and large. The bold indicates the best performance.}
\label{performance_3}
\end{table}}

{\renewcommand{\arraystretch}{0.6}
\begin{table}[!ht]
\small
\vspace{7mm}
\centering
\setlength{\tabcolsep}{3pt}
\begin{tabular}{ll|cc|cc|cc|cc}
\toprule
&{\fontfamily{lmtt}\selectfont Prediction Horizon} & \multicolumn{2}{c|}{{\fontfamily{lmtt}\selectfont t + 8}} & \multicolumn{2}{c|}{{\fontfamily{lmtt}\selectfont t + 12}} & \multicolumn{2}{c|}{{\fontfamily{lmtt}\selectfont t + 30}} & \multicolumn{2}{c}{{\fontfamily{lmtt}\selectfont t + 60}}  \\ \midrule 
 &{\fontfamily{lmtt}\selectfont Metrics} & {\fontfamily{lmtt}\selectfont MAE} & {\fontfamily{lmtt}\selectfont sMAPE} &{\fontfamily{lmtt}\selectfont MAE} & {\fontfamily{lmtt}\selectfont sMAPE} &{\fontfamily{lmtt}\selectfont MAE} & {\fontfamily{lmtt}\selectfont sMAPE} &{\fontfamily{lmtt}\selectfont MAE} & {\fontfamily{lmtt}\selectfont sMAPE} \\ 
\cmidrule(lr){1-2} \cmidrule(lr){3-4}\cmidrule(lr){5-6}\cmidrule(lr){7-8}\cmidrule(lr){9-10}

 &{\fontfamily{lmtt}\selectfont ARIMA}      & 6.3$\pm${\tiny 1.2} & 7.6$\pm${\tiny 0.9} & 7.1$\pm${\tiny 1.3} & 8.8$\pm${\tiny 1.4} & 12.4$\pm${\tiny 1.9} & 11.2$\pm${\tiny 1.8} & 13.5$\pm${\tiny 2.2} & 12.4$\pm${\tiny 2.3} \\  
 
 &{\fontfamily{lmtt}\selectfont LSTM} & 5.9$\pm${\tiny 0.9} & 7.2$\pm${\tiny 0.7} & 6.6$\pm${\tiny 1.1} & 8.1$\pm${\tiny 1.2}  & 11.3$\pm${\tiny 1.4} & 10.6$\pm${\tiny 1.4}   & 12.1$\pm${\tiny 1.7} & 11.6$\pm${\tiny 1.9}    \\  \midrule

\parbox[t]{2mm}{\multirow{7}{*}{\rotatebox[origin=c]{90}{\vspace*{-5mm}{\bf SOTA}}}}& {\fontfamily{lmtt}\selectfont TLSTM}        & 5.8$\pm${\tiny 0.8}   & 7.0$\pm${\tiny 0.6} & 6.5$\pm${\tiny 0.9}  & 7.9$\pm${\tiny 1.1} & 11.0$\pm${\tiny 1.2}    & 10.4$\pm${\tiny 1.2} & 12.4$\pm${\tiny 1.8} & 11.8$\pm${\tiny 2.0} \\ 
&{\fontfamily{lmtt}\selectfont LSTNet}    & 6.0$\pm${\tiny 0.9}    & 6.8$\pm${\tiny 0.5} &6.6$\pm${\tiny 1.0}  & 8.1$\pm${\tiny 1.2} & 11.5$\pm${\tiny 1.6}  & 10.8$\pm${\tiny 1.4} & 12.6$\pm${\tiny 1.5} & 11.7$\pm${\tiny 2.1}   \\
&{\fontfamily{lmtt}\selectfont DeerAR}     & 6.4$\pm${\tiny 1.0}    & 7.5$\pm${\tiny 0.7} & 6.3$\pm${\tiny 0.9}  & 7.9$\pm${\tiny 1.0} & 11.8$\pm${\tiny 1.7}  & 10.6$\pm${\tiny 1.5} & 11.8$\pm${\tiny 1.6} & 11.2$\pm${\tiny 1.7}  \\ 
&{\fontfamily{lmtt}\selectfont Informer}     & 5.5$\pm${\tiny 0.6}    & 6.8$\pm${\tiny 0.5} & 6.3$\pm${\tiny 1.0}  & 8.2$\pm${\tiny 1.1} & 10.7$\pm${\tiny 1.3}  & 10.1$\pm${\tiny 1.1} & 11.5$\pm${\tiny 1.5} & 10.9$\pm${\tiny 1.5}  \\ 

&{\fontfamily{lmtt}\selectfont LogSparse}     & 5.7$\pm${\tiny 0.8}    & 7.0$\pm${\tiny 0.5} & 6.6$\pm${\tiny 1.1}  & 8.4$\pm${\tiny 1.3} & 11.4$\pm${\tiny 1.5}  & 10.5$\pm${\tiny 1.3} & 12.2$\pm${\tiny 1.7} & 11.4$\pm${\tiny 1.8}  \\ 
\midrule
&{\fontfamily{lmtt}\selectfont GenF-2 (Ours)}    & {\bf5.1$\pm${\tiny 0.6}}    & {\bf 6.4$\pm${\tiny 0.5}}  & {\bf 5.8$\pm${\tiny 0.7}}  & {\bf 7.5$\pm${\tiny 0.8}} & 10.2$\pm${\tiny 1.1}  & 9.6$\pm${\tiny 1.3} & 11.7$\pm${\tiny 1.6}  & 10.4$\pm${\tiny 1.9} \\ 

&{\fontfamily{lmtt}\selectfont GenF-5 (Ours)}    & 5.3$\pm${\tiny 0.7}   & 6.6$\pm${\tiny 0.6}  &5.9$\pm${\tiny 0.8}    &  7.7$\pm${\tiny 0.9} & {\bf 9.4$\pm${\tiny 1.0}} & {\bf 9.2$\pm${\tiny 1.1}} & {\bf 10.6$\pm${\tiny 1.3}} & {\bf 9.9$\pm${\tiny 1.4}} \\  

\bottomrule
\end{tabular}
\caption{Performance comparison (MAE, sMAPE (\%)$\pm$standard deviation) between three variants of GenF and two classical models, five SOTA methods in predicting greenhouse gas concentrations using the Greenhouse Gas dataset. Note that all MAE values are multiplied by 100. The bold indicates the best performance.}
\label{performance_4}
\end{table}}

\clearpage
\newpage
\subsection{Ablation Study on More Units}
\label{Ablation_Study}
We repeat our ablation study using Subjects ID 109 and 890 in the MIMIC-III Vital Sign dataset. In Fig. \ref{performance_890}, we observe that the proposed CWGAN-TS can generate more accurate and stable synthetic data as compared to other studied approaches. The error penalty term and the ITC algorithm can help to improve the performance in generating synthetic data. For example, in the upper figure, when comparing CWGAN-TS to CWGAN-RS, the results demonstrate that the ITC algorithm helps to improve the performance by 34\%, indicating its effectiveness in selecting typical units. Compared to the CWGAN-GP, CWGAN-TS improves performance by 80\% in generating synthetic data, suggesting the crucial role of the error penalty term. Furthermore, the model that can generate accurate synthetic data tends to have better forecasting performance. This suggests the important role of the CWGAN-TS and the ITC algorithm in improving long-range forecasting.

\begin{figure}[!ht]
\vspace{10mm}
\centering
\resizebox{13.5cm}{!}{
\begin{tikzpicture}
\def\ymaxnum{105}
\def\yminnum{75}
\def\auxlineht{111}
\def\arrowheight{95}
\begin{axis}[
    axis line style={line width=0.5pt},
    width=13.5cm,
    height=5.5cm,
    axis lines=left,
    xmin=0,
    xmax=30.5,
    ymin=75,
    ymax=105,
    xtick={0,7,28},
    xticklabels={0,20,28},
    ytick={75, 80,90,100,105},
    yticklabels = {75, 80,90,100,105},
    legend style={nodes={scale=0.9, transform shape}, legend columns=1, at={(1.26,0.1)},anchor=south east,font=\small, legend style={/tikz/every even column/.append style={column sep=0.0001cm, row sep = 0.0002cm}}},
    legend image post style={line width=0.5mm},
    legend cell align=left,
    xlabel near ticks,
    x label style={at={(1,0)},},
    xlabel={\hspace{2mm} time step ($t$)},
    ylabel near ticks,
    ylabel={Heart Rate},
]
    \addplot [forget plot]coordinates{(7,0)(7,\auxlineht)};
    \addplot [forget plot]coordinates{(14.87,0)(14.87,\auxlineht)};
    \addplot [forget plot]coordinates{(28,0)(28,\auxlineht)};
     
    \addplot [forget plot,<->]coordinates{(0,104)(7,104)}
       node[coordinate,name=a,pos=0.5]{};
    \addplot [forget plot,<->]coordinates{(14.87,104)(7,104)}
       node[coordinate,name=b,pos=0.5]{};
    \addplot [forget plot,<->]coordinates{(28,104)(14.87,104)}
       node[coordinate,name=c,pos=0.5]{};

    \addplot [cyan,thick, line width = 1pt] table[x=modify, y=True_Value] {table_109.tex};
    \addplot [red,thick, line width = 1pt,dashed] table[x=modify,y=cWGAN_GEP] {table_109.tex};  
    \addplot [black,thick, line width = 1pt,dashed] table[x=modify, y=cWGAN_GEP_w/o_MI] {table_109.tex}; 
    
    \addplot [green,thick, line width = 1pt,dashed] table[x=modify, y=cWGAN_GP] {table_109.tex}; 
    \addplot [orange,thick, line width = 1pt,dashed] table[x=modify, y=GAN] {table_109.tex};
    
    \addplot [blue,thick, line width = 1pt,dashed] table[x=modify,y=LSTM] {table_109.tex};
    
    \addplot [teal,thick, line width = 1pt,dashed] table[x=modify,y=ARIMA] {table_109.tex}; 

    \legend{True Value (0/0), {\bf CWGAN-TS (0.77/4.9)}, CWGAN-RS (1.16/8.8), CWGAN-GP (3.81/12.2),  GAN (4.96/12.6), LSTM (2.27/11.1), ARIMA (2.8/11.8)}
    
    \addplot [forget plot]coordinates{(0,75)}
        node[coordinate,name=A,pos=0.5]{};
    \addplot [forget plot]coordinates{(0,104)}
        node[coordinate,name=B,pos=0.5]{};
    \addplot [forget plot]coordinates{(7,104)}
        node[coordinate,name=C,pos=0.5]{};

    \addplot [forget plot]coordinates{(7,75)}
        node[coordinate,name=D,pos=0.5]{};
    \addplot [forget plot]coordinates{(28,75)}
        node[coordinate,name=E,pos=0.5]{};
    \addplot [forget plot]coordinates{(28,104)}
        node[coordinate,name=F,pos=0.5]{};

    \addplot [forget plot]coordinates{(14.87,104)}
        node[coordinate,name=G,pos=0.5]{};
    \addplot [forget plot]coordinates{(14.87,75)}
        node[coordinate,name=H,pos=0.5]{};
    \addplot [forget plot]coordinates{(28,104)}
        node[coordinate,name=J,pos=0.5]{};

\end{axis}
\node at(a)[align=center,above]{\small observation window};
\node at(b)[align=center,above,yshift=-0.8mm]{\small synthetic window};
\node at(c)[align=center,above,yshift=-0.8mm]{\small prediction horizon};

\node at(b)[xshift=-5.5mm,  yshift=-41.1mm]{21};
\draw[line width=0.07mm,  gray](3.74, -0.008) -- (3.74, -0.075);
  
\node at(b)[xshift=4.8mm,  yshift=-41.1mm]{22};
\draw[line width=0.07mm,  gray](4.77, -0.008) -- (4.77, -0.075);

\node at(b)[xshift=15.1mm,  yshift=-41.1mm]{23};
\draw[line width=0.07mm,  gray](5.80, -0.008) -- (5.80, -0.075);

\node at(b)[xshift=25.4mm,  yshift=-41.1mm]{24};
\draw[line width=0.07mm,  gray](6.83, -0.008) -- (6.83, -0.075);

\node at(b)[xshift=35.7mm,  yshift=-41.1mm]{25};
\draw[line width=0.07mm,  gray](7.86, -0.008) -- (7.86, -0.075);

\node at(b)[xshift=46mm,  yshift=-41.1mm]{26};
\draw[line width=0.07mm,  gray](8.89, -0.008) -- (8.89, -0.075);

\node at(b)[xshift=56.3mm,  yshift=-41.1mm]{27};
\draw[line width=0.07mm,  gray](9.92, -0.008) -- (9.92, -0.075);

\begin{scope}[on background layer]
    \fill[top color=gray!90!green!20!white,bottom color=white] (A) rectangle (C);
    \fill[top color=green!80!red!20!white,bottom color=white] (D) rectangle (G);
    \fill[top color=green!70!blue!30!white,bottom color=white] (H) rectangle (J);
\end{scope}
\end{tikzpicture}}

\vspace*{0mm}\resizebox{13.5cm}{!}{
\hspace{1.8mm}\begin{tikzpicture}
\def\ymaxnum{100}
\def\yminnum{40}
\def\auxlineht{98}
\def\arrowheight{95}
\begin{axis}[
    axis line style={line width=0.5pt},
    width=13.5cm,
    height=5.5cm,
    axis lines=left,
    xmin=0,
    xmax=30.5,
    ymin=60,
    ymax=95,
    xtick={0,7,28},
    xticklabels={0,20,28},
    ytick={60,70,80,90,95},
    yticklabels = {60,70,80,90,95},
    legend style={nodes={scale=0.9, transform shape}, legend columns=1, at={(1.26,0.1)},anchor=south east,font=\small, legend style={/tikz/every even column/.append style={column sep=0.0001cm, row sep = 0.0002cm}}},
    legend image post style={line width=0.5mm},
    legend cell align=left,
    xlabel near ticks,
    x label style={at={(1,0)},},
    xlabel={\hspace{2mm} time step ($t$)},
    ylabel near ticks,
    ylabel={Heart Rate},
]
    \addplot [forget plot]coordinates{(7,0)(7,\auxlineht)};
    \addplot [forget plot]coordinates{(14.87,0)(14.87,\auxlineht)};
    \addplot [forget plot]coordinates{(28,0)(28,\auxlineht)};
     
    \addplot [forget plot,<->]coordinates{(0,94)(7,94)}
       node[coordinate,name=a,pos=0.5]{};
    \addplot [forget plot,<->]coordinates{(14.87,94)(7,94)}
       node[coordinate,name=b,pos=0.5]{};
    \addplot [forget plot,<->]coordinates{(28,94)(14.87,94)}
       node[coordinate,name=c,pos=0.5]{};

    \addplot [cyan,thick, line width = 1pt] table[x=modify, y=True_Value] {table_890.tex};
    \addplot [red,thick, line width = 1pt,dashed] table[x=modify,y=cWGAN_GEP] {table_890.tex};  

    \addplot [black,thick, line width = 1pt,dashed] table[x=modify, y=cWGAN_GEP_w/o_MI] {table_890.tex}; 
    \addplot [green,thick, line width = 1pt,dashed] table[x=modify, y=cWGAN_GP] {table_890.tex}; 
    \addplot [orange,thick, line width = 1pt,dashed] table[x=modify, y=GAN] {table_890.tex};
    
    \addplot [blue,thick, line width = 1pt,dashed] table[x=modify,y=LSTM] {table_890.tex};
    \addplot [teal,thick, line width = 1pt,dashed] table[x=modify,y=ARIMA] {table_890.tex};   


    \legend{True Value (0/0), {\bf CWGAN-TS (4.6/5.8)}, CWGAN-RS (6.5/8.3), CWGAN-GP (12.1/14.7),  GAN (15.6/15.9), LSTM (9.67/11.5), ARIMA (18.5/15.7)}
    
    \addplot [forget plot]coordinates{(0,60)}
        node[coordinate,name=A,pos=0.5]{};
    \addplot [forget plot]coordinates{(0,94)}
        node[coordinate,name=B,pos=0.5]{};
    \addplot [forget plot]coordinates{(7,94)}
        node[coordinate,name=C,pos=0.5]{};
        
    \addplot [forget plot]coordinates{(7,60)}
        node[coordinate,name=D,pos=0.5]{};
    \addplot [forget plot]coordinates{(28,60)}
        node[coordinate,name=E,pos=0.5]{};
    \addplot [forget plot]coordinates{(28,94)}
        node[coordinate,name=F,pos=0.5]{};

    \addplot [forget plot]coordinates{(14.87,94)}
        node[coordinate,name=G,pos=0.5]{};
    \addplot [forget plot]coordinates{(14.87,60)}
        node[coordinate,name=H,pos=0.5]{};
    \addplot [forget plot]coordinates{(28,94)}
        node[coordinate,name=J,pos=0.5]{};    
        
\end{axis}
\node at(a)[align=center,above]{\small observation window};
\node at(b)[align=center,above,yshift=-0.8mm]{\small synthetic window};
\node at(c)[align=center,above,yshift=-0.8mm]{\small prediction horizon};
\node at(b)[xshift=-5.5mm,  yshift=-41.1mm]{21};
\draw[line width=0.07mm,  gray](3.74, -0.008) -- (3.74, -0.075);
  
\node at(b)[xshift=4.8mm,  yshift=-41.1mm]{22};
\draw[line width=0.07mm,  gray](4.77, -0.008) -- (4.77, -0.075);

\node at(b)[xshift=15.1mm,  yshift=-41.1mm]{23};
\draw[line width=0.07mm,  gray](5.80, -0.008) -- (5.80, -0.075);

\node at(b)[xshift=25.4mm,  yshift=-41.1mm]{24};
\draw[line width=0.07mm,  gray](6.83, -0.008) -- (6.83, -0.075);

\node at(b)[xshift=35.7mm,  yshift=-41.1mm]{25};
\draw[line width=0.07mm,  gray](7.86, -0.008) -- (7.86, -0.075);

\node at(b)[xshift=46mm,  yshift=-41.1mm]{26};
\draw[line width=0.07mm,  gray](8.89, -0.008) -- (8.89, -0.075);

\node at(b)[xshift=56.3mm,  yshift=-41.1mm]{27};
\draw[line width=0.07mm,  gray](9.92, -0.008) -- (9.92, -0.075);

\begin{scope}[on background layer]
    \fill[top color=gray!90!green!20!white,bottom color=white] (A) rectangle (C);
    \fill[top color=green!80!red!20!white,bottom color=white] (D) rectangle (G);
    \fill[top color=green!70!blue!30!white,bottom color=white] (H) rectangle (J);
\end{scope}
\end{tikzpicture}}
\vspace{-2mm}
\caption{Synthetic data recursively generated by various models and their corresponding forecasting performance based on observed and generated data, for predicting heart rate using the MIMIC-III Vital Signs dataset (Upper: Subject ID 109, lower: Subject ID 890). The values in parentheses are the MSE of the synthetic data generation and the forecasting performance, averaged over the synthetic window and prediction horizon, respectively.}

\label{performance_890}
\end{figure}
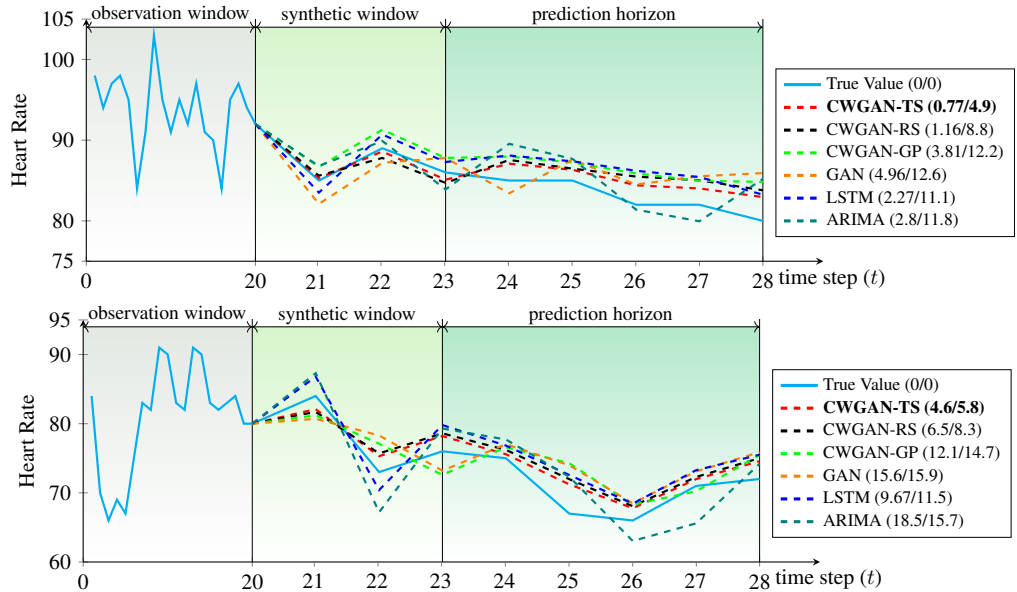

\newpage
\subsection{Ablation Study: Averaged Performance over All Subjects}
\label{A3}
In this subsection, we study the synthetic data generation performance ($t + 1$ to $t + 3$) and the corresponding forecasting performance ($t  +  4$ to $t  +  8$) based on observed and synthetic data. Instead of a single subject, we show the averaged performance of all subjects in Table \ref{ablation_study_new}. We highlight that the generative performance of CWGAN-TS at $t + 1$ is comparable to or worse than the strong baselines. However, starting from $t + 2$, the generative performance of CWGAN-TS becomes better, leading to better forecasting performance. This demonstrates the superior performance of CWGAN-TS in mitigating error propagation over the strong baselines and its impact on the forecasting performance. Overall, we observe that CWGAN-TS outperforms the best performing benchmark by 5.9\% - 6.86\% in synthetic data generation and by 5.06\% -  10.9\% in long-range forecasting. 

\begin{table}[!ht]
\small
\vspace{5mm}
\begin{center}
\setlength{\tabcolsep}{8.2pt}
{\renewcommand{\arraystretch}{0.9}
\begin{tabular}{l|ccc|cccccc}
\toprule
{\fontfamily{lmtt}\selectfont Performance (MSE) } & \multicolumn{3}{c|}{{\fontfamily{lmtt}\selectfont Generation}} & \multicolumn{4}{c}{{\fontfamily{lmtt}\selectfont Forecasting}} \\ \midrule
& {\fontfamily{lmtt}\selectfont t + 1}  & {\fontfamily{lmtt}\selectfont t + 2}  & {\fontfamily{lmtt}\selectfont t + 3}  & {\fontfamily{lmtt}\selectfont t + 4}  & {\fontfamily{lmtt}\selectfont t + 8}  & {\fontfamily{lmtt}\selectfont t + 12}  & {\fontfamily{lmtt}\selectfont t + 24}  \\
\midrule
{\fontfamily{lmtt}\selectfont ARIMA }  & 106.3 & 125.7 & 145.3 & 160.2 & 193.2 & 201.4 & 225.7 \\
{\fontfamily{lmtt}\selectfont LSTM }                            &  95.2 & 110.6 & 127.8 & 142.8 & 169.6 & 187.3 & 199.4\\ \midrule
{\fontfamily{lmtt}\selectfont LSTNet }                         &  91.3 & 107.4 & 116.8 & 136.8 & 164.3 & 179.4 & 189.3 \\
{\fontfamily{lmtt}\selectfont TLSTM }                              &  90.2 & 106.5 & 113.2 & 129.3 & 159.4 & 169.8 & 185.7\\
{\fontfamily{lmtt}\selectfont DeepAR }                              &  90.9 & 109.4 & 118.4 & 134.3 & 165.8 & 176.8 & 193.6\\
{\fontfamily{lmtt}\selectfont Informer }                              &  89.7 & 105.2 & 109.4 & 133.1 & 155.3 & 170.7 & 185.7\\
{\fontfamily{lmtt}\selectfont LogSparse }   &  {\bf 89.3} & 103.7 & 114.8 & 135.3 & 157.2 & 171.2 & 181.7 \\ \midrule
{\fontfamily{lmtt}\selectfont TimeGAN }   &  94.2 & 114.3 & 121.8 & 139.7 & 167.9 & 180.4 & 195.3 \\ 
\midrule
{\fontfamily{lmtt}\selectfont CWGAN-TS (Ours) }                 &  90.7 & {\bf 97.5} & {\bf 101.9} & {\bf 115.2} & {\bf 145.3} & {\bf 161.2} & {\bf 169.4} \\ \midrule
{\fontfamily{lmtt}\selectfont Min Improvement }  & - & 5.97\% & 6.86\% & 10.9\% & 6.44\% & 5.06\% & 6.77\% \\
\bottomrule
\end{tabular}}
\end{center}
\caption{Ablation Study: Averaged results (MSE) of all subjects in generating/forecasting heart rate using CWGAN-TS and others.}
\label{ablation_study_new}
\vspace{-4mm}
\end{table}


\vspace{10mm}\subsection{More Results Using TSCV}
\label{TSCV}
We repeated our experiments for predicting $\text{NO}_{\text{2}}$ emission (the Air Quality dataset) using Time Series Cross Validation (TSCV). Specifically, all datasets are split into training set (60\%), validation set (20\%) and test set (20\%) in chronological order to ensure that there is no potential data leakage from using future observations from the other units. We tune all parameters using the validation set (same as Section \ref{ES}) and compare the performance (MSE$\pm$standard deviation over 5 runs) of using TSCV to Population Informed Cross Validation (PICV) as follows. We can see that the predictive performance becomes slightly different as the training and test data have changed, but the proposed GenF still outperforms the strong baselines by at least 7.8\% (compare GenF-2 to LSTNet at $t + 12$).

\begin{table}[!ht]
\small
\vspace{5mm}
\begin{center}
\setlength{\tabcolsep}{8.0pt}
\begin{tabular}{l|ccc|ccc}
\toprule
                              & \multicolumn{3}{c|}{{\fontfamily{lmtt}\selectfont PICV }}& \multicolumn{3}{c}{{\fontfamily{lmtt}\selectfont TSCV }} \\ \midrule
{\fontfamily{lmtt}\selectfont Prediction Horizon } & {\fontfamily{lmtt}\selectfont t + 4 } & {\fontfamily{lmtt}\selectfont t + 8 } & {\fontfamily{lmtt}\selectfont t + 12 } & {\fontfamily{lmtt}\selectfont t + 4 } & {\fontfamily{lmtt}\selectfont t + 8 } &  {\fontfamily{lmtt}\selectfont t + 12 } \\
\midrule
{\fontfamily{lmtt}\selectfont TLSTM }         & 548$\pm$13 & 867$\pm$33 & 993$\pm$23  & 603$\pm$22  & 819$\pm$26 & 1023$\pm$29 \\ 
{\fontfamily{lmtt}\selectfont LSTNet }        & 518$\pm$33 & 827$\pm$28 & 919$\pm$39  & 557$\pm$27 & 790$\pm$25 & 940$\pm$30 \\  
{\fontfamily{lmtt}\selectfont DeepAR }   & 562$\pm$19 & 887$\pm$24 & 1047$\pm$36 & 647$\pm$26 &  868$\pm$21 & 1069$\pm$35\\ 
{\fontfamily{lmtt}\selectfont Informer }   & 534$\pm$15 & 851$\pm$23 & 922$\pm$31 & 585$\pm$28 &  889$\pm$20 & 971$\pm$33\\ 
{\fontfamily{lmtt}\selectfont LogSparse }   & 543$\pm$17 & 835$\pm$25 & 907$\pm$35 & 572$\pm$29 &  855$\pm$19 & 951$\pm$31\\ \midrule
{\fontfamily{lmtt}\selectfont GenF-1 }       & 465$\pm$24 & 715$\pm$31 & 856$\pm$23  & 509$\pm$32 & 688$\pm$25 & 893$\pm$27 \\ 
{\fontfamily{lmtt}\selectfont GenF-2 }       & {\bf 409$\pm$29} & {\bf 669$\pm$27} & {\bf 814$\pm$30}  & {\bf 433$\pm$24} & {\bf 593$\pm$29} & {\bf 866$\pm$22} \\ 
\bottomrule
\end{tabular}
\end{center}
\caption{Performance comparison (MSE$\pm$standard deviation) between PICV and TSCV using the Multi-Site Air Quality dataset.}
\label{TSCV_COM}
\end{table}

\newpage

\subsection{Flexibility of GenF}
\label{LP}
The idea of GenF is to balance iterative and direct forecasting using synthetic data. In the main paper, we use CWGAN-TS as the synthetic data generator and the canonical transformer as the predictor. In fact, GenF can be considered as a general framework and is flexible to support any model as the synthetic data generator and the predictor. In Section \ref{SD}, we have examined several variants of CWGAN-TS in synthetic data generation. In this subsection, we use the same CWGAN-TS as the synthetic data generator and replace the canonical transformer with two SOTA transformer based methods: Informer and LogSparse. We let CWGAN-TS generate synthetic data for next 6 time steps and summarize the corresponding forecasting performance (MAE, sMAPE$\pm$standard deviation over 3 runs) in Table \ref{performance_add}.

It can be seen that the GenF with the canonical transformer as the predictor (denoted as GenF-6$^{*}$) outperforms the other configurations. We posit this is mainly due to two reasons: (i) Given the long-range dependencies are already captured via the synthetic data, using deep transformer models such as LogSparse and Informer may not be helpful. It is because that these SOTA methods aim to simplify the complexity of the self-attention mechanism and it is essentially a trade-off between performance and complexity, meaning the simplification comes with a loss in performance. (ii) The deep transformer models (e.g., 5 - 7 layers in Informer) may introduce the overfitting issue, leading to poor forecasting performance.

\vspace*{10mm}\begin{table}[!h]
\small
\centering
\setlength{\tabcolsep}{5pt}
{\renewcommand{\arraystretch}{1.2}
\begin{tabular}{l|lc|cc|cc|ccc}
\toprule
\multicolumn{1}{c|}{{\fontfamily{lmtt}\selectfont Prediction Horizon}} & \multicolumn{2}{c|}{{\fontfamily{lmtt}\selectfont t + 8}} & \multicolumn{2}{c|}{{\fontfamily{lmtt}\selectfont t + 12}} & \multicolumn{2}{c|}{{\fontfamily{lmtt}\selectfont t + 30}} & \multicolumn{2}{c}{{\fontfamily{lmtt}\selectfont t + 60}} \\ \midrule
{\fontfamily{lmtt}\selectfont Metrics} &{\fontfamily{lmtt}\selectfont MAE} & {\fontfamily{lmtt}\selectfont sMAPE} &{\fontfamily{lmtt}\selectfont MAE} & {\fontfamily{lmtt}\selectfont sMAPE} &{\fontfamily{lmtt}\selectfont MAE} & {\fontfamily{lmtt}\selectfont sMAPE} &{\fontfamily{lmtt}\selectfont MAE} & {\fontfamily{lmtt}\selectfont sMAPE}\\
\cmidrule(lr){1-2} \cmidrule(lr){2-3}\cmidrule(lr){4-5}\cmidrule(lr){6-7}\cmidrule(lr){8-9}
 {\fontfamily{lmtt}\selectfont GenF-6-Informer}       & 6.4$\pm${\tiny 0.2} &  5.3$\pm${\tiny 0.4} & 7.7$\pm${\tiny 0.6} & 6.4$\pm${\tiny 0.6} & 11.4$\pm${\tiny 1.4} & 9.9$\pm${\tiny 1.4} & 13.8$\pm${\tiny 2.6} & 11.4$\pm${\tiny 0.9} \\ 
 {\fontfamily{lmtt}\selectfont GenF-6-LogSparse}      & 6.6$\pm${\tiny 0.5}   & 5.6$\pm${\tiny 0.4} & 8.0$\pm${\tiny 0.4}   & 6.7$\pm${\tiny 0.5}  & 11.7$\pm${\tiny 1.2}  & 9.7$\pm${\tiny 0.8}  & 14.7$\pm${\tiny 2.9} & 11.5$\pm${\tiny 1.4}   \\ \midrule      
 {\fontfamily{lmtt}\selectfont GenF-6$^{*}$ (Ours)}          & 6.3$\pm${\tiny 0.4}  &   5.2$\pm${\tiny 0.3} & {\bf 7.4$\pm${\tiny 0.7}}  & {\bf 6.2$\pm${\tiny 0.6}}   & {\bf 10.7$\pm${\tiny 1.1}} & {\bf 8.9$\pm${\tiny 0.9}}  & {\bf 12.6$\pm${\tiny 2.5}} &  {\bf 10.2$\pm${\tiny 1.4}}  \\ 
 
\bottomrule
\end{tabular}}
\caption{Performance (MAE, sMAPE$\pm$standard deviation) of predicting blood pressure using the MIMIC-III dataset. All methods use CWGAN-TS to generate synthetic data for next 6 time steps. GenF-6-Informer is to use Informer as the predictor, GenF-6-LogSparse is to use LogSparse as the predictor and GenF-6$^{*}$ is to use the canonical transformer as the predictor. }

\label{performance_add}
\end{table}

\end{document}